\documentclass{article}




\usepackage[final,nonatbib]{neurips_2021}


\usepackage[utf8]{inputenc} 
\usepackage[T1]{fontenc}    
\usepackage{url}            
\usepackage{booktabs}       
\usepackage{amsfonts}       
\usepackage{nicefrac}       
\usepackage{microtype}      
\usepackage{xcolor}         

\usepackage{subcaption}          
\usepackage{alphalph} 
\usepackage{float}
\usepackage{caption}
\usepackage{verbatim}

\usepackage{algorithm,algorithmicx}
\usepackage{algpseudocode}
\usepackage{amsthm}
\usepackage{enumerate}
\usepackage{pgfplots}

\pgfplotsset{compat=1.9}
    \pgfplotsset{
        discard if not/.style 2 args={
            filter discard warning=false,
            x filter/.append code={
                \edef\tempa{\thisrow{#1}}
                \edef\tempb{#2}
                \ifx\tempa\tempb
                \else
                    
                \fi
            },
        },
    }

\usepgfplotslibrary{fillbetween}

\definecolor{borange}{RGB}{239,91,12}
\definecolor{bblue}{RGB}{0,56,101}
\usepackage[hidelinks,backref=page]{hyperref}        
\hypersetup{
    colorlinks=true,
    linkcolor=blue,
    filecolor=bmblue,      
    urlcolor=bmblue,
    citecolor=borange,
    pdftitle={Scaling Up Exact Neural Network Compression by ReLU Stability},
}

\newtheorem{definition}{Definition}
\newtheorem{lemma}{Lemma}

\newtheorem{corollary}[lemma]{Corollary}

\newtheorem{proposition}[lemma]{Proposition}

\usepackage{amsmath}
\usepackage{amsfonts}
\usepackage{bm}


\newcommand{\batch}{\mathcal{B}}
\newcommand{\meanbatch}{\mu_\batch}
\newcommand{\varbatch}{\sigma^2_\batch}
















\def\1{\bm{1}}








\def\vb{{\bm{b}}}

\def\vf{{\bm{f}}}

\def\vw{{\bm{w}}}
\def\vx{{\bm{x}}}
\def\vy{{\bm{y}}}
\def\vz{{\bm{z}}}

\def\vP{{\bm{P}}}
\def\vQ{{\bm{Q}}}



\def\mI{{\bm{I}}}

\def\mW{{\bm{W}}}

\DeclareMathAlphabet{\mathsfit}{\encodingdefault}{\sfdefault}{m}{sl}
\SetMathAlphabet{\mathsfit}{bold}{\encodingdefault}{\sfdefault}{bx}{n}



\def\sD{{\mathbb{D}}}

\def\sL{{\mathbb{L}}}

\def\sP{{\mathbb{P}}}
\def\sQ{{\mathbb{Q}}}

\def\sS{{\mathbb{S}}}
\def\sT{{\mathbb{T}}}
\def\sU{{\mathbb{U}}}

\def\sX{{\mathbb{X}}}










\newcommand{\lRegularization}{\ell}
\newcommand{\lOne}{\lRegularization_1}
\newcommand{\lTwo}{\lRegularization_2}

\newcommand{\myReferTable}[1]{Tab.~\ref{#1}}
\newcommand{\myReferFigure}[1]{Fig.~\ref{#1}}

\definecolor{bblue}{rgb}{0,0.15,0.25}
\definecolor{bmblue}{rgb}{0,0.51,0.73}
\definecolor{borange}{rgb}{0.97,0.3,0.09}

\definecolor{mpl_red}{rgb}{0.9,0,0}
\definecolor{mpl_orange}{rgb}{0.98,0.45,0.02}
\definecolor{mpl_green}{rgb}{0.08,0.69,0.10}
\definecolor{mpl_blue}{rgb}{0.01,0.26,0.87}
\definecolor{mpl_violet}{rgb}{0.6,0.5,0.92}


\newcommand{\accuracyCompressionPlotWithLabels}[9]{
        \begin{tikzpicture}[scale={#1}]
        \begin{axis}[axis y line*=left,
            cycle list name=mark list,
            ymin={#2},
            ymax={#3},
            xtick={#6},
            xlabel={\Huge{$\ell_1$}},
            ylabel={\Huge{\color{bmblue} Test accuracy (\%)}},
            xmajorgrids=true,
            ticklabel style={font=\huge},
            ]
            \addplot+[bmblue, smooth,
                line width={#4},
                scatter,
                mark=none]
            table[x=reg,y=acc]
            {#5};
            \filldraw [fill=#7, draw=none, opacity=0.1] (#8,0) rectangle (#9,1000);
        \end{axis}
        \begin{axis}[axis y line=right,
            axis x line=none,
            ymin=-2, ymax=62,
            ytick={0,30,60},
            ylabel style={align=center},
            ylabel={\Huge{\color{borange} Nodes removed (\%)}},
            ymajorgrids=true,
            ticklabel style={font=\huge},
            ]
            \addplot+[borange, smooth,
                line width={#4},
                scatter,
                mark=none,]
            table[x=reg,y=nodes-percent]
            {#5};
        \end{axis}
        \end{tikzpicture}
        }

\newcommand{\accuracyCompressionPlot}[9]{\begin{tikzpicture}[scale={#1}]
                \begin{axis}[axis y line*=left,
                    cycle list name=mark list,
                    ylabel={{\fontsize{50}{60}\selectfont \phantom{-}}},
                    ymin={#2},
                    ymax={#3},
                    xtick={#6},
                    xtick style={draw=none},
                    xmajorgrids=true,
                    xticklabels={,,},
                    yticklabels={,,}
                    ticklabel style={font=\tiny},
                    scaled x ticks=false
                    ]
                    \addplot+[bmblue, smooth,
                        line width={#4},
                        scatter,
                        mark=none]
                    table[x=reg,y=acc]
                    {#5};
                    \filldraw [fill=#7, draw=none, opacity=0.1] (#8,0) rectangle (#9,1000);
                \end{axis}
                \begin{axis}[axis y line=right,
                    axis x line=none,
                    ymin=-2, ymax=62,
                    ytick={0,30,60},
                    xtick style={draw=none},
                    ymajorgrids=true,
                    xticklabels={,,},
                    yticklabels={,,},
                    ticklabel style={font=\tiny},
                    ]
                    \addplot+[borange, smooth,
                        line width={#4},
                        scatter,
                        mark=none,]
                    table[x=reg,y=nodes-percent]
                    {#5};
                \end{axis}
            \end{tikzpicture}
            }

\newcommand{\runtimeRegularizationPlot}[5]{
    \begin{tikzpicture}[scale={#1}]
        \begin{semilogyaxis}
            [
            cycle list name=mark list,
            ymin={#2},
            ymax={#3},
            xtick={#4},
            xlabel={\Large{$\ell_1$}},
            ylabel={\Large{Runtime (s)}},
            xmajorgrids=true,
            ymajorgrids=true,
            ticklabel style={font=\Large},
            legend style={nodes={scale=0.75, inner sep=0.5pt, transform shape}, inner xsep=1pt, inner ysep=0.5pt, at={(0.5,1.43)},anchor=north east},
            ]
            {#5}
        \end{semilogyaxis}
    \end{tikzpicture}
}

\newcommand{\addLinePlotWithErrorBar}[7]{
    \addplot[name path=upper,draw=none,scatter, mark=none, forget plot] table[x=reg,y expr=\thisrow{#6}+ 0.5*\thisrow{#7},discard if not={architecture}{#4}] {#3};
    \addplot[name path=lower,draw=none,scatter, mark=none, forget plot] table[x=reg,y expr=\thisrow{#6}- 0.5*\thisrow{#7},discard if not={architecture}{#4}] {#3};
    \addplot[fill={#1}, fill opacity=0.5, mark=none, forget plot] fill between[of=upper and lower];

    \addplot+[{#1}, {#5},
        line width={#2},
        scatter,
        mark=none]
    table[x=reg,y={#6},discard if not={architecture}{#4}]
    {#3};

}


\title{Scaling Up Exact Neural Network Compression by ReLU Stability}

%

\author{%
  Thiago Serra \\
  Bucknell University \\
  Lewisburg, PA, United States \\
  \texttt{thiago.serra@bucknell.edu} \\
  \And
  Xin Yu \\
  University of Utah \\
  Salt Lake City, UT, United States \\
  \texttt{xin.yu@utah.edu} \\
  \AND
  Abhinav Kumar \\
  Michigan State University \\
  East Lansing, MI, United States \\
  \texttt{kumarab6@msu.edu} \\
  \And
  Srikumar Ramalingam \\
  Google Research \\
  New York, NY, United States \\
  \texttt{rsrikumar@google.com} \\
}

\begin{document}

\maketitle

\begin{abstract}
We can compress a rectifier network while exactly preserving its underlying functionality with respect to a given input domain if some of its neurons are stable. 
However, current approaches to determine the stability of neurons with Rectified Linear Unit~(ReLU) activations require solving or finding a good approximation to multiple discrete optimization problems. 
In this work, 
we introduce an algorithm based on solving a single optimization problem to identify all stable neurons.  
Our approach is on median 183 times faster than the state-of-art method on CIFAR-10, which 
allows us to explore exact compression on deeper ($5 \times 100$) and wider ($2 \times 800$) networks 
within minutes. 
For classifiers trained under an amount of $\lOne$ regularization that does not worsen accuracy, we can remove up to 56\% of the connections on the CIFAR-10 dataset. The code is available at the following link,
\url{https://github.com/yuxwind/ExactCompression}.
\end{abstract}

\section{Introduction}\label{sec:introduction}

For the past decade, 
the computing requirements associated with state-of-art machine learning models 
have grown faster 
than typical hardware improvements~\cite{amodei2018aicompute}.  
Although those requirements are often associated with training neural networks, 
they also translate into larger models, which are challenging to deploy 
in modest computational environments,  
such as in mobile devices. 

Meanwhile, we have learned that the expressiveness of the models associated with neural networks---when measured in terms of their number of linear regions
---grows polynomially on the number of neurons and occasionally exponentially on the network depth~\cite{pascanu2013regions,montufar2014linear,raghu2017expressive,serra2018bounding,hanin2019complexity,hanin2019deep}. 
Hence, we may wonder if 
the pressing need for larger models could not be countered by such gains in model complexity. 
Namely, if we could not represent the same model using a smaller neural network. 
More specifically, 
we consider the following definition of equivalence~\cite{kumar2019transformations, serra2020lossless}:  

\begin{definition}
Two neural networks $\mathcal{N}_1$ and $\mathcal{N}_2$ with associated functions $\vf_1 : \mathbb{R}^{n_0} \rightarrow \mathbb{R}^m$ and $\vf_2 : \mathbb{R}^{n_0} \rightarrow \mathbb{R}^m$ are local equivalent with respect to a domain $\sD \subseteq \mathbb{R}^{n_0}$ if $\vf_1(x) = \vf_2(x)~\forall x \in \sD$.
\end{definition}

There is an extensive literature on methods for compressing neural networks~\cite{cheng2018survey,blalock2020survey},  
which is aimed at obtaining smaller networks that are nearly as good as the original ones. 
These methods generally produce networks that are not equivalent, 
and thus require retraining the neural network for better accuracy. They may also lead to models in which the relative accuracy for some classes is more affected than that of other classes~\cite{hooker2019forget}.

Compressing a neural network while preserving its associated function is a relatively less explored topic, 
which has been commonly referred to as \emph{lossless compression}~\cite{serra2020lossless,sourek2021lossless}. 
However, that term has also been used for the more general case in which the overall accuracy of the compressed network is no worse than that of the original network
regardless of equivalence~\cite{xing2020lossless}. 
Hence, we regard \emph{exact compression} as a more appropriate term when equivalence is preserved.

Exact compression has distinct benefits and challenges. 
On the one hand, there is no need for retraining and no risk of disproportionately affecting some classes more than others. 
On the other hand, optimization problems that are formulated for exact compression need to account for any valid input as opposed to relying on a sample of inputs. 
In this paper, we focus on how to scale such an approach 
to a point in which exact compression starts to become practical for certain applications.

In particular, 
we introduce and evaluate a faster algorithm for exact compression based on identifying all neurons with Rectified Linear Unit~(ReLU) activation that have linear behavior, 
which are denoted as \emph{stable}. 
In other words, those are the neurons for which the mapping of inputs to outputs is always  characterized by a linear function, 
which is either the constant value $0$ or the  preactivation output. 
We can remove or merge such neurons---and even  entire layers in some cases---while obtaining a smaller but equivalent neural network. 
Our main contributions are the following: 

\begin{enumerate}[(i)]
\item \label{itm:first} We propose the algorithm \texttt{ISA} (Identifying Stable Activations), which is based on solving a single Mixed-Integer Linear Programming~(MILP) formulation to verify the stability of all neurons of a 
feedforward neural network with ReLU activations. \texttt{ISA} is  
faster than 
solving MILP formulations for every neuron---either optimally~\cite{tjeng2019stability} or approximately~\cite{serra2020lossless}. 
Compared to~\cite{serra2020lossless}, the median improvement is of $\mathbf{83}$ times on MNIST dataset ($\mathbf{183}$ times on CIFAR-10 dataset and $\mathbf{137}$ times on CIFAR-100 dataset)~---and in fact greater in larger networks.

\item We reduce the runtime with a GPU-based preprocessing step that identifies neurons that are not stable with respect to the training set. 
The median improvement for that part alone is of $\mathbf{3.2}$ times on MNIST dataset. 

\item We outline and prove the correctness of a new compression algorithm, \texttt{LEO++} (Lossless Expressiveness Optimization, as in~\cite{serra2020lossless}),  which leverages \textcolor{blue}{(}\ref{itm:first}\textcolor{blue}{)} to perform all compressions once per layer instead of once per stable neuron~\cite{serra2020lossless}.

\item We leverage the scalability of our approach to investigate exact compressibility 
on classifiers that are deeper ($5 \times 100$) and wider ($2 \times 800$) than previously studied in~\cite{serra2020lossless} ($2 \times 100$).  
We show that approximately $20$\% of the neurons and $40$\% of the connections can be removed from MNIST classifiers trained with an amount of $\lOne$ regularization that does not worsen accuracy.
\end{enumerate}

\section{Related work}\label{sec:related}

There are many pruning methods for sparsifying or reducing the size of neural networks by 
removing connections, neurons, or even 
layers. 
They are justified by the significant redundancy among  parameters~\cite{denil2013parameters} and 
the better generalization bounds of compressed networks~\cite{arora2018bounds,zhou2019bounds,suzuki2020spectral,suzuki2020bounds}.

Surveys such as~\cite{blalock2020survey} note that these methods are typically based on a tradeoff between model efficiency and quality: 
the models of compressed neural networks tend to have a comparatively lower accuracy, 
save some exceptions~\cite{han2015connections,xing2020lossless,suzuki2020spectral}. 
Nevertheless, 
the loss in accuracy due to compression is disproportionately distributed across classes and more severe in a fraction of them; 
the most impacted inputs are those that the original network could not classify well; 
and the overall robustness to noise or adversarial examples is diminished~\cite{hooker2019forget}.
Furthermore,
the amount of compression yielding similar performance to the original network can vary significantly depending on the task~\cite{lost}. 

To make up for model changes and potential accuracy loss, one may rely on a three-step procedure consisting of 
(1) training the neural network; 
(2) compression; and 
(3) retraining. 
Nevertheless, the scope of compression methods is seldom restricted to the second step. 
For example, 
the compressibility of a neural network hinges on how it was trained, 
with regularizations such as $\lOne$ and $\lTwo$ often used to make part of the network parameters negligible in magnitude---and hopefully in impact as well. 

In fact, the two main---and recurring---themes in this topic are pruning connections 
when the corresponding parameters are sufficiently small~\cite{hanson1988minimal,mozer1989relevance,janowsky1989prunning,han2015connections,han2016deepcompression,li2017cnn,frankle2019lottery,gordon2020bert,tanaka2020synapticflow} and when the impact of the connection on the loss function is sufficiently small~\cite{lecun1989damage,hassibi1993surgeon,lebedev2016braindamage,molchanov2017taylor,dong2017layerwise,yu2018importance,zeng2018mlprune,lee2019pretraining,wang2019eigendamage,wang2020tickets,woodfisher}. 
The main issue with the first approach is 
that small weights may nevertheless be important, 
although it is possible to empirically quantify their impact on the loss function~\cite{rosenfeld2020predictability}.
The main issue with the second approach is the 
computational cost of calculating the second-order derivatives of the loss function in deep networks,  
which has lead to many approaches for approximating such values.

Overlapping with such approximations, 
there is a growing literature on casting neural network compression as an optimization problem~\cite{he2017cnn,luo2017thinet,aghasi2017nettrim,yu2018importance,serra2020lossless,elaraby2020importance}. 
Most often, 
these formulations aim to minimize the impact of the compression on how the neural network performs on the training set.

Other lines of work and overlapping themes in neural network compression include 
combining similar neurons~\cite{srinivas2015datafree,mariet2016diversity,suau2020distillation,suzuki2020spectral}; 
low-rank approximation, factorization, 
and random projection of the weight matrices~\cite{jaderberg2014lowrank,denton2014linear,lebedev2015decomposition,arora2018bounds,wang2018wide,su2018tensorial,wang2019eigendamage,suzuki2020spectral,suzuki2020bounds,li2020understanding}; and statistical tests on the relevance of a connection to network output~\cite{xing2020lossless}.
Many recent approaches focus on pruning 
at initialization instead of after training~\cite{lee2019pretraining,lee2020pretraining,wang2020tickets,tanaka2020synapticflow,frankle2021initialization} 
as well as on what parameters to use when these networks are retrained~\cite{frankle2019lottery,liu2019structure,renda2020retraining}.

Exact compression was only recently explored for fully-connected feedforward neural networks~\cite{serra2020lossless} and graph neural networks~\cite{sourek2021lossless}. 
Nevertheless, 
we may associate it with the literature on neural network equivalency, which includes  
verifying that  networks are equivalent~\cite{narodytska2018verifying,buning2020equivalence}, 
identifying operations that produce equivalent networks~\cite{hechtnielsen1990weightspaces,chen1993geometry,kurkova1994equivalent,kumar2019transformations,phuong2020equivalence}, 
reconstructing networks from their outputs~\cite{albertini1993form,albertini1993identifiability,fefferman1994recovering,alfalou2003recurrent,rolnick2020engineering},  
and evaluating the effect of redundant representations on  training~\cite{berner2019degeneracy,petersen2020topological}.

\section{Setting and notation}\label{sec:notation}

We consider fully-connected feedforward neural networks 
with $L$ hidden layers, in which we denote $n_l$ as the number of units---or width---of layer $l \in \sL := \{1, 2, \ldots, L\}$ and $x^l_i$ as the output of the $i$-th unit of layer $l$ for $i \in \{1, 2, \ldots, n_l\}$. 
For uniformity, we denote $\vx^0 \in \mathbb{R}^{n_0}$ as the network input. 
We denote the output of the $i$-th unit of layer $l$ as $x^l_i = \sigma^l(y^l_i)$, 
where the pre-activation output $y^l_i := \vw^l_i \cdot  x^{l-1} + b^l_i$ is defined by the learned weights $\vw^l_i \in \mathbb{R}^{n_{l-1}}$ and the bias $b^l_i \in \mathbb{R}$ of the unit 
as well as the activation function $\sigma^l : \mathbb{R} \rightarrow \mathbb{R}$ associated with layer $l$, 
which is $\sigma^l(u) = \max\{0,u\}$---the ReLU~\cite{hahnloser2000relu}. 
The output layer may have a different structure, such as the softmax layer~\cite{bridle1990softmax}, which is nevertheless irrelevant 
for our purpose of compressing the hidden layers. 
For every layer $l \in \sL$, 
let $\mW^l = [ \vw^l_1 \vw^l_2 \ldots \vw^l_{n_l} ]^T$ be the matrix of weights, 
$\mW^l_{\sS}$ be a submatrix of $\mW^l$ consisting of the rows in set $\sS$,  
and $\vb^l = [ b^l_1 b^l_2 \ldots b^l_{n_l} ]^T$ be the vector of biases. 
Finally, let
$\mI_m(\sS)$ be an $m \times m$ diagonal matrix in which the $i$-th diagonal element is 1 if $i \in \sS$ and 0 if $i \notin \sS$.

\section{Identifying stability for exact compression}\label{sec:stability}

This section explains the concept of stability and describes how MILP has been used to identify stable neurons.
If the output of neuron $i$ in layer $l$ is always linear on its inputs, we say that the neuron is stable. 
This happens in two ways for the ReLU activation.
When $x^l_i = 0$ for any valid input, which implies that $y^l_i \leq 0$, 
we say that the neuron is \emph{stably inactive}. 
When $x^l_i = y^l_i$ for any valid input, which implies that $y^l_i \geq 0$, 
we say that the neuron is \emph{stably active}. 

The qualifier \emph{valid} is essential since not every input may occur in practice. 
If $\vw^l_i \neq \bm{0}$, 
there are nonempty halfspaces on $\vx^{l-1}$ that would make 
that neuron 
active or inactive, $\{ \vx^{l-1} : \vw^l_i \cdot \vx^{l-1} + b^l_i \leq 0 \}$ and $\{ \vx^{l-1} : \vw^l_i \cdot \vx^{l-1} + b^l_i \geq 0 \}$, but it is possible that valid inputs only map to one of them. 
For the first layer, 
we only need to account for the valid inputs to the neural network. 
For example, the domain of a network trained on the MNIST dataset is $\{ \vx^0 : \vx^0 \in [0,1]^{784} \}$~\cite{lecun1998mnist}. 
For the remaining hidden layers, 
we also account for the combinations of outputs that can be produced by the preceding layers given their valid inputs and  parameters. 
Hence, assessing stability is no longer straightforward. 

We can determine if a neuron of a trained neural network 
is stable by solving optimization problems to maximize and minimize its preactivation output~\cite{tjeng2019stability}.  
The main decision variables in these problems are the inputs for which the preactivation output is optimized. 
Consequently, there is also a decision variable associated with the output of every neuron, 
in addition to other variables described below.

\textbf{MILP formulation of a single neuron ~~} 
For every neuron $i$ of layer $l$, 
we map every input vector $\vx^{l-1}$ to the corresponding output $x^l_i$ through a set of linear constraints that also include 
a binary variable $z^l_i$ denoting if the unit is active or not, 
a variable for the pre-activation output $y^l_i$, 
a variable $\chi^l_i := \max\{0, -y^l_i\}$ denoting the output of a complementary fictitious unit, 
and positive constants $M^l_i$ and  $\mu^l_i$ that are as large as $x^l_i$ and $\chi^l_i$ can be. 
The formulation below is explained in Appendix~\ref{ap:mip_relu}. 
\begin{align}
    \vw^l_i \cdot \vx^{l-1} + b^l_i = y^l_i = x^l_i - \chi^l_i \label{cons:mip_relu_main_start} \\
    0 \leq x^l_i \leq M^l_i z^l_i  \\
    0 \leq \chi^l_i \leq \mu^l_i (1 - z^l_i)  \\
    z^l_i \in \{0, 1\} \label{cons:mip_relu_main_end}
\end{align}

\textbf{Using MILP to determine stability ~~}
Let $\sX \subset \mathbb{R}^{n_0}$ be the set of valid inputs for the neural network, 
which we may assume to be bounded in every direction. 
We can obtain the interval $[\underline{\mathcal{Y}}^{l'}_i, \overline{\mathcal{Y}}^{l'}_i]$ 
for the preactivation output $y^{l'}_i$ of neuron $i$ in layer $l'$ by solving the following optimization problems~\cite{tjeng2019stability}:
\begin{align}\label{eq:min_y}
    \underline{\mathcal{Y}}^{l'}_i := \left\{ \min  \vw^{l'}_i \cdot \vx^{l'-1} + b^{l'}_i  :  \vx^0 \in \sX; \eqref{cons:mip_relu_main_start}-\eqref{cons:mip_relu_main_end}~\forall l \in [l'-1], i \in [n_l] \right\}
\end{align}
\begin{align}\label{eq:max_y}
    \overline{\mathcal{Y}}^{l'}_i := \left\{ \max  \vw^{l'}_i \cdot \vx^{l'-1} + b^{l'}_i  :  \vx^0 \in \sX; \eqref{cons:mip_relu_main_start}-\eqref{cons:mip_relu_main_end}~\forall l \in [l'-1], i \in [n_l] \right\}
\end{align}
When $\overline{\mathcal{Y}}^{l'}_i \leq 0$, 
then $x^{l'}_i = 0$ for every $x^0 \in \sX$ and the neuron is stably inactive. 
When $\underline{\mathcal{Y}}^{l'}_i \geq 0$, 
then $x^{l'}_i = y^{l'}_i$ for every $x^0 \in \sX$ and the neuron is stably active. 

Variations of the formulations above have been proposed for diverse tasks over neural networks, 
such as verifying them~\cite{cheng2017mip}, embedding their model into a broader decision-making problem~\cite{sanner2017planning,bergman2019janos,delarue2020rlvrp}, 
and measuring their expressiveness~\cite{serra2018bounding}. 
When stable units are identified, other optimization problems over trained neural networks become easier to solve~\cite{tjeng2019stability}. 
For example, weight regularization can induce neuron stability and facilitate adversarial robustness verification~\cite{xiao2019training}. 
There is extensive work on the properties of such formulations and methods to solve them more effectively~\cite{fischetti2018mip,huchette2019ipco,botoeva2020dependency,serra2020empirical,huchette2020mp}.

For the purpose of identifying stable neurons, however, 
it is not scalable to analyze large neural networks by 
solving such optimization problems for every  neuron~\cite{tjeng2019stability}---or even by just approximately solving each of them to ensure that $\overline{\mathcal{Y}}^{l'}_i \leq 0$ or $\underline{\mathcal{Y}}^{l'}_i \geq 0$~\cite{serra2020lossless}.

\section{A new algorithm for exact compression}\label{sec:algorithm}

Based on observations discussed in what follows (I to~III), 
we propose a new MILP formulation to identify stable neurons (Section~\ref{sub:milp}), 
means to generate feasible solutions while the formulation is solve (Section~\ref{sub:feasible}), 
a preprocessing step to reduce the effort to solve the formulation (Section~\ref{sub:preprocess}), 
the resulting algorithm \texttt{ISA} for identifying all stable neurons at once (Section~\ref{sub:stability_algo}), 
and the revised compression algorithm \texttt{LEO++} exploiting all stable neurons in each layer at once (Appendix~\ref{ap:compression_algo}).

\subsection{A new MILP formulation}\label{sub:milp}

Consider the two observations below and their implications: 

\textbf{I: The overlap between optimization problems ~~}
Although previous approaches require solving many optimization problems, 
their formulations are all very similar: 
we maximize or minimize the same objective function for each neuron, 
the feasible set of the problems for each layer are the same, 
and they are contained in the feasible set of problems for the subsequent layers. 

\textbf{II: Proving stability is harder than disproving it ~~}
Certifying that a neuron is stable is considerably more complex than certifying that a neuron is \emph{not} stable. 
For the former, 
we need to exhaustively show that all inputs lead to the neuron always being active or always being inactive, 
which can be achieved by solving \eqref{eq:min_y} and \eqref{eq:max_y} for every neuron. 
For the latter, 
we just need a pair of inputs to the neural network such that the neuron is active with one of them and inactive with the other. 

Therefore, we consider the problem of finding an input that serves as a certificate of a neuron not being stable to as many neurons of unknown classification as possible. 
%
%
For that purpose, 
we define a decision variable $p^l_i \in \{0,1\}$ to denote if an input activates neuron $i$ in layer $l$. 
Likewise, 
we define a decision variable $q^l_i \in \{0,1\}$ to denote if an input does not activate neuron $i$ in layer $l$. 
Furthermore, 
we restrict the scope of the problem to state that have not been previously observed by using $\sP^l \subseteq \{1, \ldots, n_l\}$ as the set of neurons in layer $l$ for which there is no known input that activates the neuron. 
Likewise, 
we use $\sQ^l \subseteq \{1, \ldots, n_l\}$ as the set of neurons in layer $l$ for which there is no known input that does not activate the neuron. 
For brevity, let $\vP := (\sP^1, \ldots, \sP^L)$ and $\vQ := (\sQ^1, \ldots, \sQ^L)$ characterize an instance of such optimization problem, 
which is formulated as follows:
\begin{align}
    \mathcal{C}\left( \vP, \vQ \right) = & \max & \sum_{l \in \sL} \left( \sum_{i \in \sP^l} p^l_i + \sum_{i \in \sQ^l} q^l_i \right) \label{cons:of} \\
    &\text{s.t.} & \vx^0 \in \sX \\ 
         && \eqref{cons:mip_relu_main_start}-\eqref{cons:mip_relu_main_end}~\forall l \in \sL, i \in [n_l] \label{cons:pq_unit} \\
         && 0 \leq p^l_i \leq z^l_i~\forall l \in \sL, i \in \sP^l \label{cons:p} \\
         && 0 \leq q^l_i \leq 1-z^l_i~\forall l \in \sL, i \in \sQ^l \label{cons:q} \\
         && p^l_i, q^l_i \in \{0,1\} \label{cons:pq_bin}
\end{align}
Note that constraint~\eqref{cons:pq_bin} is actually not necessary. We refer to Appendix~\ref{ap:binary_relaxation} 
for more details.

The formulation above yields an input that maximizes the number of neurons with an activation state that has not been previously observed. 
The following results show that it entails an approach in which no more than $N+1$ such formulations are solved. We refer to Appendix~\ref{ap:milp_proofs} for the proofs.

\begin{proposition}\label{prop:cpq0}
If $\mathcal{C}(\vP,\vQ)=0$, 
then every neuron $i \in \sP^l$ is stably inactive 
and every neuron $i \in \sQ^l$ is stably active. 
\end{proposition}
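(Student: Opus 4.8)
The plan is to prove the contrapositive of each claim, exploiting the fact that the objective \eqref{cons:of} is a sum of the nonnegative binary variables $p^l_i$ and $q^l_i$. Since each such variable is at least $0$, every feasible solution has objective value $\geq 0$, so $\mathcal{C}(\vP,\vQ)=0$ is equivalent to the statement that \emph{no} feasible solution attains objective value $\geq 1$. I would therefore establish stability by showing that, if some neuron in $\sP^l$ or $\sQ^l$ were \emph{not} stable in the corresponding sense, one could exhibit a feasible solution with objective value at least $1$, contradicting $\mathcal{C}(\vP,\vQ)=0$.

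First I would treat the stably-inactive case. Suppose, for contradiction, that some neuron $i \in \sP^l$ is not stably inactive. By the definition in Section~\ref{sec:stability}, there is then a valid input $\vx^0 \in \sX$ for which $x^l_i \neq 0$, hence $y^l_i > 0$. Feeding this input through the network and assigning each variable its forward-pass value (with $\chi^l_i := \max\{0,-y^l_i\}$, and $z^l_i := 1$ whenever $y^l_i \geq 0$ and $z^l_i := 0$ whenever $y^l_i < 0$) produces a point satisfying \eqref{cons:mip_relu_main_start}--\eqref{cons:mip_relu_main_end}, as guaranteed by the encoding established in Appendix~\ref{ap:mip_relu}. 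Because $y^l_i > 0$ forces $z^l_i = 1$, constraint \eqref{cons:p} permits $p^l_i = 1$, yielding a feasible solution of objective value at least $1$---the desired contradiction. The stably-active case is symmetric: if $i \in \sQ^l$ is not stably active, there is a valid input with $x^l_i \neq y^l_i$, which for the ReLU forces $y^l_i < 0$; the corresponding feasible solution has $z^l_i = 0$, so \eqref{cons:q} permits $q^l_i = 1$, again contradicting $\mathcal{C}(\vP,\vQ)=0$.

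The only genuinely delicate point---and the step I would be most careful about---is the correspondence between a valid input and a feasible point of the formulation, together with the handling of the boundary case $y^l_i = 0$. The correspondence is not a real obstacle, since \eqref{cons:mip_relu_main_start}--\eqref{cons:mip_relu_main_end} are exactly the standard big-$M$ encoding of the ReLU forward pass (Appendix~\ref{ap:mip_relu}), so any valid input lifts to a feasible solution with the chosen $z^l_i$ admissible. The boundary case is dispatched by the strictness built into the negated definitions: ``not stably inactive'' yields $y^l_i > 0$ (not merely $\geq 0$) and ``not stably active'' yields $y^l_i < 0$, so the sign of $y^l_i$ unambiguously dictates the value of $z^l_i$ needed to set $p^l_i$ or $q^l_i$ to $1$. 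I do not expect any further subtlety, since the binarity constraint \eqref{cons:pq_bin} is not even required for the argument, consistent with the remark preceding the proposition.
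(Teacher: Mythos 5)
Your proof is correct and takes essentially the same route as the paper's: both rest on the observation that any feasible solution with $z^l_i=1$ (resp.\ $z^l_i=0$) for $i\in\sP^l$ (resp.\ $i\in\sQ^l$) admits $p^l_i=1$ (resp.\ $q^l_i=1$) and hence a positive objective, so $\mathcal{C}(\vP,\vQ)=0$ forces the corresponding activation state never to occur. You merely phrase the argument as a contrapositive and make explicit the lifting of a valid network input to a feasible MILP point (and the boundary case $y^l_i=0$), steps the paper's proof leaves implicit.
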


\begin{corollary}\label{cor:np1}
The stability of all neurons of a neural network can be determined by solving formulation~\eqref{cons:of}--\eqref{cons:pq_bin} at most $N+1$ times, 
where $N := \sum\limits_{l \in \sL} n_l$.
\end{corollary}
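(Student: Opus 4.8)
The plan is to read Corollary~\ref{cor:np1} as the termination guarantee of the natural iterative scheme suggested by Proposition~\ref{prop:cpq0}, and to bound its iteration count with a monovariant. First I would fix the scheme: initialize $\sP^l = \sQ^l = \{1,\dots,n_l\}$ for every $l \in \sL$; repeatedly solve \eqref{cons:of}--\eqref{cons:pq_bin}; and after each solve inspect the activation pattern $z^l_i$ of the returned input, deleting $i$ from $\sP^l$ whenever $z^l_i = 1$ and from $\sQ^l$ whenever $z^l_i = 0$. I would stop the first time the optimum is $\mathcal{C}(\vP,\vQ) = 0$. Proposition~\ref{prop:cpq0} then certifies that every neuron still in some $\sP^l$ is stably inactive and every neuron still in some $\sQ^l$ is stably active; conversely, a neuron deleted from both sets has been exhibited active for one input and inactive for another, hence is unstable. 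So termination classifies every neuron, and the whole content of the corollary is the bound on the number of solves.

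For the count I would use the potential $\Phi := \sum_{l \in \sL}(|\sP^l| + |\sQ^l|)$, which starts at $2N$. The key arithmetic fact is that at an optimum one may take $p^l_i = z^l_i$ on $\sP^l$ and $q^l_i = 1 - z^l_i$ on $\sQ^l$ (each enters the objective with a positive coefficient and is capped exactly by these values), so the objective equals the number of deletions performed in that iteration; that is, each solve lowers $\Phi$ by precisely $\mathcal{C}(\vP,\vQ)$. The decisive observation is the first solve: with both sets full the objective collapses to $\sum_{l}\sum_{i}\bigl(z^l_i + (1-z^l_i)\bigr) = N$ for every feasible input, so a single solve drops $\Phi$ from $2N$ to $N$ and leaves each neuron in exactly one set. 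Thereafter any solve with positive objective is integer-valued and so removes at least one further element; since $\Phi$ is a nonnegative integer sitting at $N$ after the first solve, at most $N$ additional solves can occur before the objective is forced to $0$, giving at most $N+1$ solves in all.

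The step I expect to require the most care is precisely this final accounting, because the naive bound from ``each solve removes at least one (neuron, state) pair out of $2N$'' only yields $2N+1$; getting down to $N+1$ hinges entirely on recognizing that the first solve resolves one state for all $N$ neurons simultaneously. I would also want to treat the endgame cleanly: when some stable neurons remain, the terminating solve returning $\mathcal{C}(\vP,\vQ)=0$ must be counted, whereas when no neuron is stable the sets empty out and $\mathcal{C}(\vP,\vQ)=0$ holds trivially without a further solve, and checking that both branches land at or below $N+1$ is the one genuinely fiddly point. The remaining ingredients---feasibility of \eqref{cons:of}--\eqref{cons:pq_bin} for nonempty bounded $\sX$, and the optimal-face structure $p^l_i = z^l_i$, $q^l_i = 1-z^l_i$---are routine and can be invoked without computation.
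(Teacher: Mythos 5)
Your proposal is correct and follows essentially the same route as the paper's proof: initialize $\sP^l = \sQ^l = \{1,\dots,n_l\}$, observe that the first solve resolves exactly one of the two states for every neuron (leaving $N$ unobserved states), and note that each subsequent solve with positive objective removes at least one more, so at most $N+1$ solves suffice before $\mathcal{C}(\vP,\vQ)=0$ and Proposition~\ref{prop:cpq0} certifies the remainder. Your explicit potential $\Phi$ and the careful treatment of the terminating zero-valued solve merely spell out details the paper leaves implicit.
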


Those results imply that  
we can iteratively solve the new formulation as part of an algorithm to identify all stable neurons.
In fact, we can determine the stability of the entire neural network with a single call to the MILP solver. 
Except for the last time that formulation \eqref{cons:of}--\eqref{cons:pq_bin} is solved, 
there is no need to solve it to optimality: 
any solution with a positive objective function value can be used to reduce the number of unobserved states. 
Hence, all that we need is a way to inspect every feasible solution obtained by the MILP solver and then remove the solutions in which either $p^l_i=1$ or $q^l_i=1$ for states that were already observed. 
Both of those needs can be addressed in fully-fledged MILP solvers by implementing a lazy constraint callback. 
We refer to Appendix~\ref{ap:callback} 
for more details. 
When we finally reach $\mathcal{C}(\vP,\vQ)=0$, 
the correctness of the MILP solver serves as a certificate of the stability of those remaining neurons.
The resulting algorithm is described in Section~\ref{sub:stability_algo}.

\subsection{Inducing feasible MILP solutions}\label{sub:feasible}

The runtime with a single solver call depends on 
the frequency with which feasible solutions are obtained. 
Although at most $N+1$ optimal solutions would suffice if we were to make consecutive calls to the solver until $\mathcal{C}(\vP, \vQ) = 0$, 
we should not expect the same from the first $N+1$ feasible solutions found by the MILP solver while using the lazy constraint callback because 
they may not have a positive objective function value 
due to the $p^l_i$ and $q^l_i$ variables that 
have been fixed to 0. 
On top of that, 
obtaining a feasible solution for an MILP formulation is NP-complete~\cite{cook1971feasibility}. 

\textbf{III: Finding feasible solutions to MILP formulations of neural networks is easy ~~}
To any valid input of the neural network there is a corresponding solution of the MILP formulation: the neural network input implies which neurons are active and what is their output when active~\cite{fischetti2018mip}. 

Although any random input would suffice, we have found that it is better in practice to use inputs indirectly generated by the MILP solver. 
Namely, we can use the solution of the Linear Programming~(LP) relaxation, which is solved at least once per branch-and-bound node.  
The LP relaxation is obtained from the MILP formulation by relaxing its integrality constraints. 
In the case of binary variables with domain $\{0, 1\}$, 
that consists of relaxing the domain of such variables to the continuous interval $[0,1]$. 
We use the values of $\vx^0$ in the solution of the LP relaxation as the network input, 
and thus obtain a feasible MILP solution by replacing the values of the other variables---which may be fractional for the decision variables with binary domains---by the values implied by fixing $\vx^0$. 
However imprecise due to the relaxation of the binary domains, 
the input defined by the optimal solution of the LP relaxation may intuitively guide us toward maximizing the objective function. 

\subsection{Preprocessing}\label{sub:preprocess}

In addition to generating feasible MILP solutions at every node of the branch-and-bound tree during the solving process, 
we also evaluate the training set on the trained neural network to reduce the number of states that need to be search for by the MILP solver. 
By using GPUs, this step can be completed in few seconds for all the experiments performed.

\subsection{Identifying stable neurons}\label{sub:stability_algo}

Algorithm~\ref{alg:stability}, which we denote \texttt{ISA} (Identifying Stable Activations), identifies all stable neurons of a neural network. 
The prior discussion on identifying stable units leads to the steps described between lines~\ref{alg:b_initialization} and \ref{alg:e_optimization}. 
First, 
$\vP$ and $\vQ$ are initialized between  lines~\ref{alg:b_initialization} and \ref{alg:e_initialization} according to the preprocessing step described above. 
Next, 
the MILP formulation is iteratively solved between lines~\ref{alg:b_optimization} and \ref{alg:e_optimization}. 
The block between lines~\ref{alg:b_termination} and \ref{alg:e_termination} identifies the termination criterion, 
which implies that the unobserved states cannot be obtained with any valid input. 
The block between lines~\ref{alg:b_feasible} and  \ref{alg:e_feasible} inspects every feasible solution to identify unobserved states and then to effectively remove the decision variables associated with those states from the objective function by adding a constraint that sets their value to 0. 
The block between lines~\ref{alg:b_relaxed} and  \ref{alg:e_relaxed} produces a feasible solution from a solution of the LP relaxation when the latter is produced by the MILP solver. 
For brevity, 
we assume that the block between lines~\ref{alg:b_feasible} and  \ref{alg:e_feasible} would leverage such solution at the next repetition of the loop. 

\begin{algorithm}[h!]
   \caption{\texttt{ISA} provably identifies all stable neurons of a neural network by iterating over the solution of a single MILP formulation to verify the occurrence of states unobserved in the training set}
   \label{alg:stability}
\begin{algorithmic}[1]
   \State {\bfseries Input:} neural network $\left(L, \left\{ (n_l, \mW^l, \vb^l) \right\}_{l \in \sL}\right)$ 
   \State {\bfseries Output:} stable neurons  $\left(\left\{ (\sP^l, \sQ^l) \right\}_{l \in \sL}\right)$
   \For{$l \gets 1$ {\bfseries to} $L$}  \Comment{Pre-processing step} \label{alg:b_initialization} 
     \State $\sP^l \gets$ subset of $\{1, \ldots, n_l\}$ that is \emph{never} activated by the training set 
     \State $\sQ^l \gets$ subset of $\{1, \ldots, n_l\}$ that is \emph{always} activated by the training set 
   \EndFor \label{alg:e_initialization}
   \While {solving $\mathcal{C}(\vP,\vQ)$}
   \Comment{Loop interacting with MILP solver}
   \label{alg:b_optimization} 
     \If{optimal value is proven to be 0} \label{alg:b_termination}
     \Comment{Remaining neurons are all stable}
       \State {\bfseries break} \label{alg:e_termination}
       \Comment{Nothing else to be done}
     \ElsIf{found positive MILP solution $(\bar{x},\bar{z},\bar{p},\bar{q})$}
     \Comment{Identified unobserved states}
     \label{alg:b_feasible}
       \For{$l \gets 1$ {\bfseries to} $L$} 
       \Comment{Loops over all hidden layers}
         \For{{\bfseries every} $i \in \sP^l$}
         \Comment{Loops over neurons that have not been seen \emph{active} yet}
            \If{$\bar{p}^l_i > 0$}
            \Comment{Neuron is active for the first time}
                \State $\sP^l \gets \sP^l \setminus \{ i \}$
                \Comment{Neuron is not stably inactive}
                \State $p^l_i \gets 0$ 
                \Comment{Restricts MILP to avoid identifying neuron again}
            \EndIf
         \EndFor
         \For{{\bfseries every} $i \in \sQ^l$}
         \Comment{Loops over neurons that have not been seen \emph{inactive} yet}
            \If{$\bar{q}^l_i > 0$}
            \Comment{Neuron is inactive for the first time}
                \State $\sQ^l \gets \sQ^l \setminus \{ i \}$
                \Comment{Neuron is not stably active}
                \State $q^l_i \gets 0$
                \Comment{Restricts MILP to avoid identifying neuron again}
            \EndIf
         \EndFor
       \EndFor \label{alg:e_feasible}
     \ElsIf{found LP relaxation solution $(\tilde{x},\tilde{z},\tilde{p},\tilde{q})$}
     \Comment{Input $\tilde{x}$ \emph{may} produce unseen states}
     \label{alg:b_relaxed}
       \State {use $\tilde{x}^0$ to produce an MILP solution $(\bar{x},\bar{z},\bar{p},\bar{q})$}
       \Comment{Produce unseen activations for input}
       \label{alg:e_relaxed}
     \EndIf
   \EndWhile \label{alg:e_optimization}
   \State \Return $\left(\left\{ (\sP^l, \sQ^l) \right\}_{l \in \sL}\right)$
\end{algorithmic}
\end{algorithm}

\section{Experimental results}\label{sec:experiments}

We trained and evaluated the compressibility of classifiers for the datasets MNIST~\cite{lecun1998mnist}, CIFAR-10~\cite{krizhevsky2009learning}, and CIFAR-100~\cite{krizhevsky2009learning} with and without $\lOne$ weight regularization, 
which is known to induce stability~\cite{tjeng2019stability}. 
We refer 
to Appendix~\ref{ap:implementation} 
for details on environment and implementation. 
We use the notation $L \times n$ for the architecture of $L$ hidden layers with $n$ neurons each. 
We started at $L=2$ and $n=100$, and then doubled the width $n$ or incremented the depth $L$ until the majority of the runs for MNIST classifiers for any configuration timed out after $3$ hours. 
With preliminary runs, we chose values for $\lOne$ which spanned from those for which accuracy is improving as $\lOne$ increases until those for which the accuracy starts decreasing. 
We trained and evaluated neural networks with 5 different random initialization seeds for each choice of $\lOne$. 
The amount of regularization used did not stabilize the entire layer. 
We refer to Appendix~\ref{ap:experiments} for additional figures and tables with complete results. 

\textbf{Regularization and compression ~~}
\myReferFigure{fig:curves} illustrates the average accuracy and number nodes that can be removed from networks according to architecture and dataset based on the amount of regularization used. 
When used in moderate amounts, regularization improves accuracy and very often that also coincides with enabling exact compression. We observe regularization improving accuracy in $17$ of the $21$ plots in \myReferFigure{fig:curves}. In $14$ cases, we reduce the size of these more accurate networks. Those include all architectures for MNIST (a to g), the architecture with width $400$ for CIFAR-100 (r), and all the architectures with more hidden layers for CIFAR-10 and CIFAR-100 (j, l, n, q, s, u).

\begin{figure}[!tb]
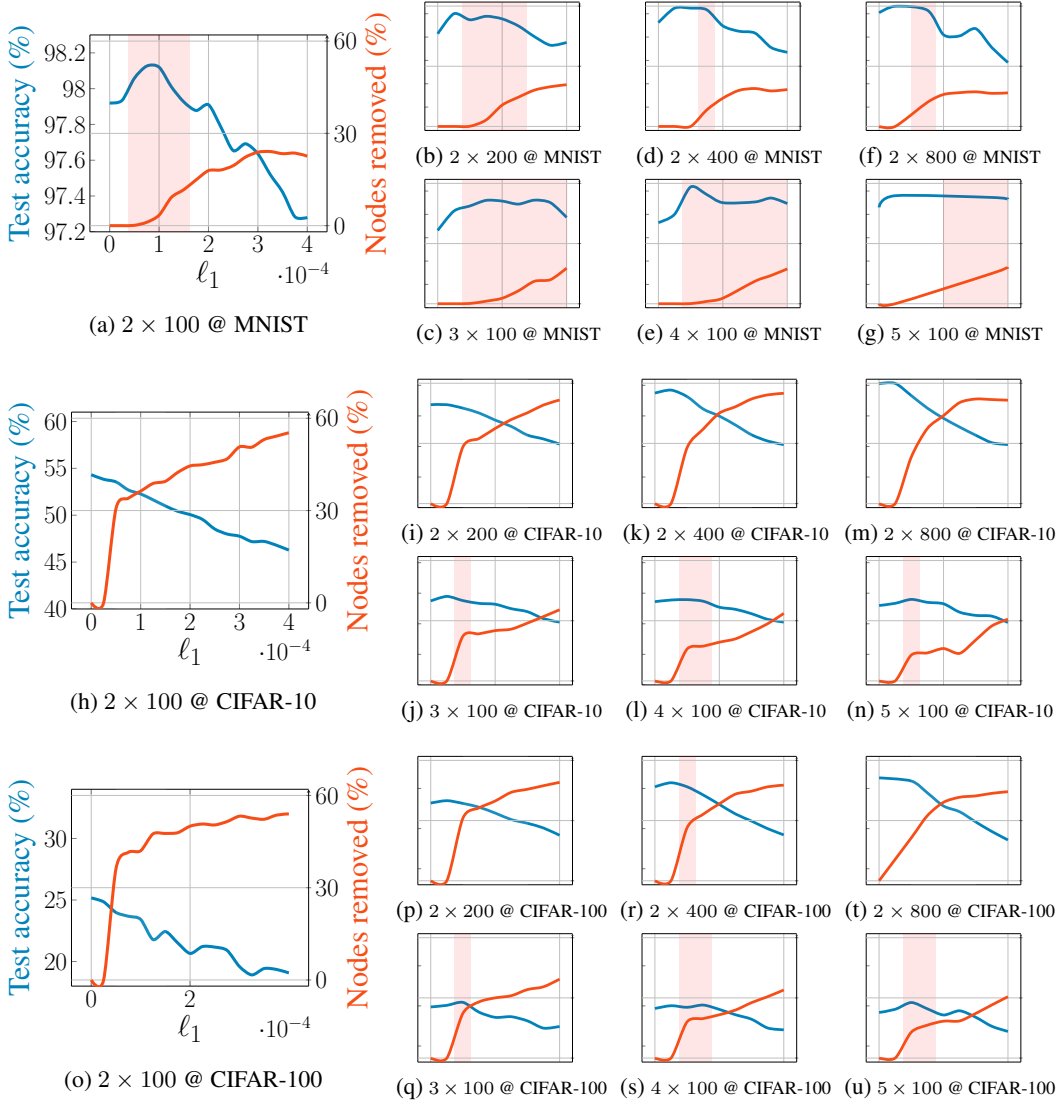

    \begin{minipage}{.37\linewidth}
        \accuracyCompressionPlotWithLabels{0.46}{97.2}{98.3}{2.5pt}{data/mnist_2x100.txt}{0, 0.0001, 0.0002, 0.0003, 0.0004}{red}{37.5}{162.5}
        \vspace{-0.3cm}
        \subcaption{$2 \times 100$ @ MNIST}
    \end{minipage}
    \hfill
    \begin{minipage}{.2\linewidth}
        \begin{subfigure}{\linewidth}
            \accuracyCompressionPlot{0.3}{97.2}{98.3}{3.5pt}{data/mnist_2x200.txt}{0, 0.0001, 0.0002, 0.0003, 0.0004}{red}{37.5}{137.5}
            \vspace{-0.1cm}
            \subcaption{\scriptsize $2 \times 200$ @ MNIST}
            \vspace{0.1cm}
        \end{subfigure}
        \begin{subfigure}{\linewidth}
            \accuracyCompressionPlot{0.3}{97.2}{98.3}{3.5pt}{data/mnist_3x100.txt}{0, 0.0001, 0.0002, 0.0003, 0.0004}{red}{37.5}{200}
            \vspace{-0.1cm}
            \subcaption{\scriptsize $3 \times 100$ @ MNIST}
        \end{subfigure}
    \end{minipage}
    \hfill
    \begin{minipage}{.2\linewidth}
        \begin{subfigure}{\linewidth}           
            \accuracyCompressionPlot{0.3}{97.2}{98.3}{3.5pt}{data/mnist_2x400.txt}{0, 0.0001, 0.0002, 0.0003, 0.0004}{red}{62.5}{87.5}
            \vspace{-0.1cm}
            \subcaption{\scriptsize $2 \times 400$ @ MNIST}
            \vspace{0.1cm}
        \end{subfigure}
        \begin{subfigure}{\linewidth}            
            \accuracyCompressionPlot{0.3}{97.2}{98.3}{3.5pt}{data/mnist_4x100.txt}{0, 0.0001, 0.0002, 0.0003, 0.0004}{red}{37.5}{200}
            \vspace{-0.1cm}
            \subcaption{\scriptsize $4 \times 100$ @ MNIST}
        \end{subfigure}
    \end{minipage}
    \hfill
    \begin{minipage}{.2\linewidth}
        \begin{subfigure}{\linewidth}
            \accuracyCompressionPlot{0.3}{97.2}{98.3}{3.5pt}{data/mnist_2x800.txt}{0, 0.0001, 0.0002, 0.0003, 0.0004}{red}{50}{87.5}
            \vspace{-0.1cm}
            \subcaption{\scriptsize $2 \times 800$ @ MNIST}
            \vspace{0.1cm}
        \end{subfigure}
        \begin{subfigure}{\linewidth}
            \accuracyCompressionPlot{0.3}{97.2}{98.3}{3.5pt}{data/mnist_5x100.txt}{0, 0.0001, 0.0002, 0.0003, 0.0004}{red}{100}{200}
            \vspace{-0.1cm}
            \subcaption{\scriptsize $5 \times 100$ @ MNIST}
        \end{subfigure}
    \end{minipage}

    \vspace{0.4cm}
    
    \begin{minipage}{.36\linewidth}
        \accuracyCompressionPlotWithLabels{0.46}{40}{61}{2.5pt}{data/cifar/cifar_2x100.txt}{0, 0.0001, 0.0002, 0.0003, 0.0004}{white}{50}{150}
        \vspace{-0.3cm}
        \subcaption{$2 \times 100$ @ CIFAR-10}
    \end{minipage}
    \hfill
    \begin{minipage}{.2\linewidth}
        \begin{subfigure}{\linewidth}
            \accuracyCompressionPlot{0.3}{40}{61}{3.5pt}{data/cifar/cifar_2x200.txt}{0, 0.0001, 0.0002, 0.0003, 0.0004}{white}{50}{150}
            \vspace{-0.1cm}
            \subcaption{\scriptsize $2 \times 200$ @ CIFAR-10}
            \vspace{0.1cm}
        \end{subfigure}
        \begin{subfigure}{\linewidth}
            \accuracyCompressionPlot{0.3}{40}{61}{3.5pt}{data/cifar/cifar_3x100.txt}{0, 0.0001, 0.0002, 0.0003, 0.0004}{red}{37.5}{62.5}
            \vspace{-0.1cm}
            \subcaption{\scriptsize $3 \times 100$ @ CIFAR-10}
        \end{subfigure}
    \end{minipage}
    \hfill
    \begin{minipage}{.2\linewidth}
        \begin{subfigure}{\linewidth}           
            \accuracyCompressionPlot{0.3}{40}{61}{3.5pt}{data/cifar/cifar_2x400.txt}{0, 0.0001, 0.0002, 0.0003, 0.0004}{white}{50}{150}
            \vspace{-0.1cm}
            \subcaption{\scriptsize $2 \times 400$ @ CIFAR-10}
            \vspace{0.1cm}
        \end{subfigure}
        \begin{subfigure}{\linewidth}            
            \accuracyCompressionPlot{0.3}{40}{61}{3.5pt}{data/cifar/cifar_4x100.txt}{0, 0.0001, 0.0002, 0.0003, 0.0004}{red}{37.5}{87.5}
            \vspace{-0.1cm}
            \subcaption{\scriptsize $4 \times 100$ @ CIFAR-10}
        \end{subfigure}
    \end{minipage}
    \hfill
    \begin{minipage}{.2\linewidth}
        \begin{subfigure}{\linewidth}
            \accuracyCompressionPlot{0.3}{40}{61}{3.5pt}{data/cifar/cifar_2x800.txt}{0, 0.0001, 0.0002, 0.0003, 0.0004}{white}{50}{150}
            \vspace{-0.1cm}
            \subcaption{\scriptsize $2 \times 800$ @ CIFAR-10}
            \vspace{0.1cm}
        \end{subfigure}
        \begin{subfigure}{\linewidth}
            \accuracyCompressionPlot{0.3}{40}{61}{3.5pt}{data/cifar/cifar_5x100.txt}{0, 0.0001, 0.0002, 0.0003, 0.0004}{red}{37.5}{62.5}
            \vspace{-0.1cm}
            \subcaption{\scriptsize $5 \times 100$ @ CIFAR-10}
        \end{subfigure}
    \end{minipage}
    
    \vspace{0.4cm}
    
    \begin{minipage}{.36\linewidth}
        \accuracyCompressionPlotWithLabels{0.46}{18}{34}{2.5pt}{data/cifar_hundred/cifar_hundred_2x100.txt}{0,0.0002}{white}{50}{150}
        \vspace{-0.3cm}
        \subcaption{$2 \times 100$ @ CIFAR-100}
    \end{minipage}
    \hfill
    \begin{minipage}{.2\linewidth}
        \begin{subfigure}{\linewidth}
            \accuracyCompressionPlot{0.3}{18}{34}{3.5pt}{data/cifar_hundred/cifar_hundred_2x200.txt}{0,0.0002}{white}{50}{150}
            \vspace{-0.1cm}
            \subcaption{\scriptsize $2 \times 200$ @ CIFAR-100}
            \vspace{0.1cm}
        \end{subfigure}
        \begin{subfigure}{\linewidth}
            \accuracyCompressionPlot{0.3}{18}{34}{3.5pt}{data/cifar_hundred/cifar_hundred_3x100.txt}{0,0.0002}{red}{37.5}{62.5}
            \vspace{-0.1cm}
            \subcaption{\scriptsize $3 \times 100$ @ CIFAR-100}
        \end{subfigure}
    \end{minipage}
    \hfill
    \begin{minipage}{.2\linewidth}
        \begin{subfigure}{\linewidth}           
            \accuracyCompressionPlot{0.3}{18}{34}{3.5pt}{data/cifar_hundred/cifar_hundred_2x400.txt}{0,0.0002}{red}{37.5}{62.5}
            \vspace{-0.1cm}
            \subcaption{\scriptsize $2 \times 400$ @ CIFAR-100}
            \vspace{0.1cm}
        \end{subfigure}
        \begin{subfigure}{\linewidth}            
            \accuracyCompressionPlot{0.3}{18}{34}{3.5pt}{data/cifar_hundred/cifar_hundred_4x100.txt}{0,0.0002}{red}{37.5}{87.5}
            \vspace{-0.1cm}
            \subcaption{\scriptsize $4 \times 100$ @ CIFAR-100}
        \end{subfigure}
    \end{minipage}
    \hfill
    \begin{minipage}{.2\linewidth}
        \begin{subfigure}{\linewidth}
            \accuracyCompressionPlot{0.3}{18}{34}{3.5pt}{data/cifar_hundred/cifar_hundred_2x800.txt}{0,0.0002}{white}{50}{150}
            \vspace{-0.1cm}
            \subcaption{\scriptsize $2 \times 800$ @ CIFAR-100}
            \vspace{0.1cm}
        \end{subfigure}
        \begin{subfigure}{\linewidth}
            \accuracyCompressionPlot{0.3}{18}{34}{3.5pt}{data/cifar_hundred/cifar_hundred_5x100.txt}{0,0.0002}{red}{37.5}{87.5}            
            \vspace{-0.1cm}
            \subcaption{\scriptsize  $5 \times 100$ @ CIFAR-100}
        \end{subfigure}
    \end{minipage}

    \caption{
    \textbf{{\color{bmblue}Test accuracy} and {\color{borange}nodes removed} for varying amounts of $\ell_1$ regularization.} The plots correspond to classifiers with different architectures on the (a)-(g) MNIST, 
    (h)-(n) CIFAR-10, and (o)-(u) CIFAR-100 datasets.  
    For each dataset, we keep the ranges of all the axes of the smaller plots  same as the bigger plot but hide the ticks for brevity.
    Networks trained with $\ell_1$ regularization can be exactly compressed, even when regularization improves accuracy. In light red background, test accuracy is better than with no regularization (blue curve) and exact compression occurs (red curve). 
    }
    \label{fig:curves}
\end{figure}

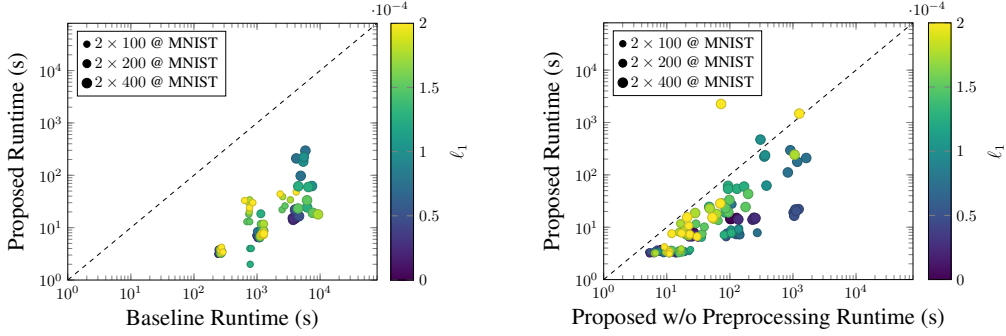
\begin{figure}[!tb]
    \centering
    \begin{minipage}{.49\linewidth}
    \begin{tikzpicture}[scale=0.6]
        \begin{loglogaxis}[
            legend pos=north west,
            colorbar,
            point meta=explicit,
            point meta min=0,
            point meta max=0.0002,
            cycle list name=mark list,
            colormap name=viridis,
            xmin=1,
            xmax=80000,
            ymin=1,
            ymax=80000,
            xlabel={\Large{Baseline Runtime (s)}},
            ylabel={\Large{Proposed Runtime (s)}},
            colorbar style={ylabel={\large$\lOne$}},
        ]
        \draw[dashed]
            (axis cs:1,1) -- 
            (axis cs:80000,80000);
        \addplot+[
            only marks,
            scatter,
            mark=*,
            mark size=2pt]
        table[meta=reg,x=neuron,y=network,discard if not={width}{100}]
        {data/mnist_runtime_compare.txt};
        \addlegendentry{$2 \times 100$ @ MNIST} 
        \addplot+[
            only marks,
            scatter,
            mark=*,
            mark size=2.5pt]
        table[meta=reg,x=neuron,y=network,discard if not={width}{200}]
        {data/mnist_runtime_compare.txt};
        \addlegendentry{$2 \times 200$ @ MNIST}
        \addplot+[
            only marks,
            scatter,
            mark=*,
            mark size=3pt]
        table[meta=reg,x=neuron,y=network,discard if not={width}{400}]
        {data/mnist_runtime_compare.txt};
        \addlegendentry{$2 \times 400$ @ MNIST}
        \end{loglogaxis}
        \end{tikzpicture}
    \end{minipage}
    \hfill
    \centering
    \begin{minipage}{.49\linewidth}
    \begin{tikzpicture}[scale=0.6]
        \begin{loglogaxis}[
            legend pos=north west,
            colorbar,
            point meta=explicit,
            point meta min=0,
            point meta max=0.0002,
            cycle list name=mark list,
            colormap name=viridis,
            xmin=1,
            xmax=80000,
            ymin=1,
            ymax=80000,
            xlabel={\Large{Proposed w/o Preprocessing Runtime (s)}},
            ylabel={\Large{Proposed Runtime (s)}},
            colorbar style={ylabel={\large$\lOne$}, at={(1.05,1)},anchor=north west},
        ]
        \draw[dashed]
            (axis cs:1,1) -- 
            (axis cs:80000,80000);
        \addplot+[
            only marks,
            scatter,
            mark=*,
            mark size=2pt]
        table[meta=reg,x=neuron,y=network,discard if not={width}{100}]
        {data/scatter_1_reg_fig2.txt};
        \addlegendentry{$2 \times 100$ @ MNIST} 
        \addplot+[
            only marks,
            scatter,
            mark=*,
            mark size=2.5pt]
        table[meta=reg,x=neuron,y=network,discard if not={width}{200}]
        {data/scatter_1_reg_fig2.txt};
        \addlegendentry{$2 \times 200$ @ MNIST}
        \addplot+[
            only marks,
            scatter,
            mark=*,
            mark size=3pt]
        table[meta=reg,x=neuron,y=network,discard if not={width}{400}]
        {data/scatter_1_reg_fig2.txt};
        \addlegendentry{$2 \times 400$ @ MNIST}
        \end{loglogaxis}
        \end{tikzpicture}
    \end{minipage}
   \caption{
   \textbf{Comparison of runtimes (in seconds) to identify all stable neurons.} On the left, we compare the proposed approach against the baseline from~\cite{serra2020lossless}. On the right, we compare the proposed approach with preprocessing against the proposed approach without preprocessing. 
   }
    \label{fig:runtimes}
\end{figure}

\begin{figure}[!htb]
    \centering
    \begin{minipage}{.49\linewidth}
        \begin{tikzpicture}[scale=0.6]
        \begin{axis}[
            legend pos=south east,
            colorbar,
            point meta=explicit,
            point meta min=0,
            point meta max=0.0004,
            cycle list name=mark list,
            colormap name=viridis,
            xlabel={\Large{Neurons left in MNIST classifiers}},
            ylabel={\Large{Accuracy (\%)}},
            colorbar style={ylabel={\large$\lOne$}},
            xmin=140,
            xmax=205,
        ]
        \addlegendentry{No compression}
        \addplot+[
            only marks,
            scatter,
            mark=o,
            mark size=3pt]
        table[meta=reg,x=nodes,y=accuracy]
        {data/mnist_uncompressed_regression.txt};
        \addplot+[
            only marks,
            scatter,
            mark=*,
            mark size=3pt]
        table[meta=reg,x=nodes,y=accuracy]
        {data/mnist_compressed_regression.txt};
        \addlegendentry{Compressible}
        \draw[color=orange,line width=0.5mm]
            (axis cs:140,97.335) -- 
            (axis cs:205,98.3035);
        \addlegendimage{line width=0.5mm,color=orange}
        \addlegendentry{Compressible regression}
        \end{axis}
        \end{tikzpicture}
    \end{minipage}
    \hfill
    \centering
    \begin{minipage}{.49\linewidth}
        \begin{tikzpicture}[scale=0.6]
        \begin{axis}[
            legend pos=south east,
            colorbar,
            point meta=explicit,
            point meta min=0,
            point meta max=0.0004,
            cycle list name=mark list,
            colormap name=viridis,
            xlabel={\Large{Neurons left in CIFAR-100 classifiers}},
            ylabel={\Large{Accuracy (\%)}},
            colorbar style={ylabel={\large$\lOne$}},
            xmin=85,
            xmax=205,
        ]
        \addlegendentry{No compression}
        \addplot+[
            only marks,
            scatter,
            mark=o,
            mark size=3pt]
        table[meta=reg,x=nodes,y=accuracy]
        {data/cifar100_uncompressed_regression.txt};
        \addplot+[
            only marks,
            scatter,
            mark=*,
            mark size=3pt]
        table[meta=reg,x=nodes,y=accuracy]
        {data/cifar100_compressed_regression.txt};
        \addlegendentry{Compressible}
        \draw[color=orange,line width=0.5mm]
            (axis cs:85,18.6638) -- 
            (axis cs:205,34.8758);
        \addlegendimage{line width=0.5mm,color=orange}
        \addlegendentry{Compressible regression}
        \end{axis}
        \end{tikzpicture}
    \end{minipage}
    \caption{\textbf{Relationship between size of compressed neural network and accuracy on $2 \times 100$ classifiers.} The coefficient of determination (R$^2$) for the linear regression obtained for accuracy based on neurons left for compressible networks is $69$\% on MNIST and $61$\% on CIFAR-100.}
    \label{fig:accuracy}
\end{figure}

\textbf{Runtime improvement ~~}
\myReferFigure{fig:runtimes} compares 
the baseline~\cite{serra2020lossless} with our approach on smaller MNIST classifiers---$2 \times 100$, $2 \times 200$, and $2 \times 400$---using $\lOne$ as described above. The median ratio between runtimes is $100$. The overall speedup is greater in larger networks: the median runtime ratio is $77$ for $2 \times 100$, $153$ for $2 \times 200$, and $193$ for $2 \times 400$. By comparing the runtimes when not timing out with and without the preprocessing step in $2 \times 100$, we observe a median speed up of $3.2$.

\textbf{Effect of regularization on compressibility ~~}
We observe more compression with more $\lOne$ regularization.  
For sufficiently large networks having the same accuracy as those trained with $\lOne = 0$ on MNIST, we can remove around $20$\% of the neurons and $40$\% of the connections. 
In line with~\cite{serra2020lossless}, 
we observe that the exact compressibility of neural networks trained with $\lOne = 0$ is negligible, 
but also that you can have the cake and eat it too: 
certain choices of regularization lead to better accuracy and a smaller network. 
However,the cake can get very expensive as runtimes increase considerably.

\textbf{Relationship between compressibility and accuracy ~~}
\myReferFigure{fig:accuracy} analyzes the relationship between classifier accuracy and the number of neurons left after compression for  $2 \times 100$ classifiers. 
When excluding uncompressible networks with $\lOne = 0$ or sufficiently small, we obtain a linear regressions with coefficient of determination ($R^2$) of $69$\% on MNIST, $91$\% on CIFAR-10, and $61$\% on CIFAR-100. 
That suggests that accuracy is a good proxy for how much a neural network can be compressed.


\textbf{Motivation for exact compression ~~}
We also compared exact compression with Magnitude-based Pruning (MP), one of the most commonly used inexact methods.
First, we identified all the connections that would be pruned by the removal of stably inactive neurons with our approach, which would also be harmless if identified and removed by MP. Second, we ranked all the connections based on the absolute value of their coefficients in order to identify at what pruning ratio those connections would have been removed by MP. We consistently found out across architectures of different sizes and levels of regularization that some of the pruned connections by our method would be found by MP at the $99^\text{th}$ percentile. 
In other words, even though such connections would have no impact if removed from the network, MP would only resort to removing them at very extreme levels of pruning.
Furthermore, if the same pruning ratio is used with MP, on average $10$\% of the total number of connections---or $18$\% of the pruned connections---removed by our method would not be removed by MP.

\subsection{Limitations and alternatives}
Due to the use of MILP solvers, our approach is not applicable to very large networks. In what follows, we consider ways to extend our approach by lifting some or all the guarantees provided. 

\textbf{Inexact approach to larger networks ~~}
We evaluated the impact of using only the quick preprocessing step described in Section~\ref{sub:preprocess} to determine which neurons to remove. Note that the preprocessing step is in principle intended to identify neurons that are not stable in order to avoid spending further time on them, but we can conversely assume that all the other neurons are stable at the cost of removing more than we should.
By using preprocessing alone, we identified on average $31.93$\% potentially stable neurons in MNIST classifiers, $40.86$\% in CIFAR-10 classifiers, and $41.98$\% in CIFAR-100 classifiers. Among those neurons, only a few were actually not stable when evaluated with the test set. In terms of the number of not stable neurons with respect to the test set divided by the number of stable neurons with respect to the training set, we would have removed $1.16$\% more neurons that we should for MNIST, $0.60$\% for CIFAR-10, and $1.19$\% for CIFAR-100. We refer to Appendix~\ref{ap:cnn} for experiments involving convolutional neural networks~(CNNs).

\textbf{Restricting exact compression to more likely inputs ~~}
We also tested the effect of bounding the sum of all the MNIST inputs to be within the minimum and maximum observed values. In particular, we have constrained the sum of all inputs to be within the interval $[15, 320]$ instead of $[0, 784]$. We believed that this approach would be preferable to constraining the value of individual inputs, since that would have affected the output upon rotation and translation.
Note that this constraint is equivalent to imposing a prior on the number of foreground pixels on the digits to be within a range. Global priors that jointly act on all the pixels have been used in computer vision in the pre-deep learning era, e.g., \cite{delong2012fast}.
By restricting the analysis to the cases in which the time limit has not been exceeded either before or after the change, we obtained a better runtime in $69.6$\% of the cases and the runtime geometric mean went down by $17.7$\%.

\section{Conclusion}\label{sec:conclusion}

This paper outlined the potential for exact compression of neural networks and presented an approach that makes it practical for sizes that are large enough for many applications.  
To the best of our knowledge, 
our approach is the state-of-the-art for optimization-based exact compression.
Our performance improvements come from insights about the MILP formulations associated with optimization problems over neural networks, 
which have many other applications besides exact compression. Most notably, such formulations are also used for network verification~\cite{Bunel2017,Changliu2019,rossig2020verification}.

\textbf{Societal Impact ~~}
Large models are resource-intensive for both training as well as inference. In contrast to approximate methods, our exact model compression algorithms can help deep learning practitioners to save computational time and resources without worrying about any loss in performance. That helps preventing the documented side effect of disproportionally degrading performance for some classes more than for other classes when the indicator of a successful compression is the overall performance, which could also lead to fairness issues~\cite{hooker2019forget,paganini2020forget,hooker2020bias}.

\paragraph{Acknowledgements ~~}
Thiago Serra was supported by the National Science Foundation (NSF) grant IIS 2104583.
Xin Yu and Srikumar Ramalingam were partially funded by the NSF grant IIS 1764071.
Abhinav Kumar was partially funded by the Ford Motor Company and the Army Research Office (ARO) grant W911NF-18-1-0330.
We also thank the anonymous reviewers for their constructive feedback that helped in shaping the final manuscript.

{
\small
\bibliographystyle{plain}
\bibliography{references}
}

\clearpage
\appendix 
\begin{center}
    \textbf{\Large Scaling Up Exact Neural Network Compression by ReLU Stability\\[12pt] Supplementary Material\\[18pt]}
\end{center}

\renewcommand{\thesection}{A\arabic{section}}

\section{Description of MILP formulation for a ReLU activation}\label{ap:mip_relu}

The formulation below is used to identify inputs for which a given output or activation pattern can be achieved. 
The decision variables include 
(i) the vector $\vx^0$ associated with the input of the neural network; 
(ii) the vector $\vy^l$ associated with the preactivation output of each hidden layer of the neural network; 
(iii) the vector $\vx^l$ associated with the output of each hidden layer of the neural network; 
(iv) the vector $\chi^l$ associated with the complementary output of each hidden layer of the neural network; and 
(v) the binary vector $\vz^l$ defining which neurons are active or not in each hidden layer of the neural network.
The vector of weights $\vw^l_i$ and the bias $b^l_i$ associated with each neuron as well as the constants $M^l_i$ and $\mu^l_i$ are coefficients of the formulation.
The constraints are as follows:
\begin{align}
    \vw^l_i \cdot \vx^{l-1} + b^l_i = y^l_i \label{cons:y_in} \\
    y^l_i = x^l_i - \chi^l_i \label{cons:y_out} \\
    x^l_i \leq M^l_i z^l_i \label{cons:x_ub} \\
    \chi^l_i \leq \mu^l_i (1 - z^l_i) \label{cons:chi_ub} \\
    x^l_i \geq 0 \label{cons:x_lb} \\
    \chi^l_i \geq 0 \label{cons:chi_lb} \\
    z^l_i \in \{0, 1\} \label{cons:z_bin}
\end{align}
Constraint~\eqref{cons:y_in} matches the layer input $\vx^{l-1}$ with the neuron preactivation output $y^l_i$. 
We then use the binary variable $z^l_i$ to match $y^l_i$ with the neuron output with either $x^l_i$ or 0. 
When $z^l_i = 1$, constraints~\eqref{cons:chi_ub} and \eqref{cons:chi_lb} imply that $\chi^l_i = 0$, 
and thus $x^l_i = y^l_i$ due to constraint~\eqref{cons:y_out}. 
That only happens if $y^l_i \geq 0$ due to constraint~\eqref{cons:x_lb}. 
When $z^l_i = 0$, constraints~\eqref{cons:x_ub} and \eqref{cons:x_lb} imply that $x^l_i=0$, 
and thus $\chi^l_i = -y^l_i$. 
That only happens if $y^l_i \leq 0$ due to constraint~\eqref{cons:chi_lb}.

\section{On dropping constraint~\eqref{cons:pq_bin}}\label{ap:binary_relaxation}
We avoid explicitly enforcing that  variables $p^l_i$ and $q^l_i$ are binary by leveraging that $z^l_i$ is binary.
Constraint~\eqref{cons:p} implies that $p^l_i \in [0,1]$ and $p^l_i \neq 0$ 
only if $z^l_i = 1$.
In turn, if $z^l_i=1$, 
then we can assume $p^l_i = 1$ by optimality since the objective function~\eqref{cons:of} maximizes the sum of those variables and no other constraint limits its value. 
Likewise, 
constraint~\eqref{cons:q} implies that $q^l_i \in [0,1]$ and $q^l_i \neq 0$ 
only if $z^l_i = 0$.
In turn, if $z^l_i=0$, 
then likewise we can assume $q^l_i = 1$ by optimality since the objective function~\eqref{cons:of} maximizes the sum of those variables and no other constraint limits its value.
Reducing the number of binary variables is widely regarded as a good practice to make MILP formulations easier to solve.

\section{Proofs from Section~\ref{sub:milp}}\label{ap:milp_proofs}

\newtheorem*{prop:cpq0}{Proposition \ref{prop:cpq0}}
\begin{prop:cpq0}
If $\mathcal{C}(\vP,\vQ)=0$, 
then every neuron $i \in \sP^l$ is stably inactive 
and every neuron $i \in \sQ^l$ is stably active. 
\end{prop:cpq0}
\begin{proof}
Constraint~\eqref{cons:p} is the only upper bound on $p^l_i$ besides constraint~\eqref{cons:pq_bin}. 
Hence, 
if there is any solution $(\bar{x},\bar{z},\bar{p},\bar{q})$ of~\eqref{cons:pq_unit}--\eqref{cons:pq_bin} in which $\bar{z}^l_i = 1$ for some $i \in \sP^l_i, l \in \sL$, 
then either $\bar{p}^l_i = 1$ or there is another solution $(\bar{\bar{x}},\bar{\bar{z}},\bar{\bar{p}},\bar{\bar{q}})$ in which $\bar{\bar{p}}^l_i = 1$ and all other variables have the same value. 

Likewise, 
constraint~\eqref{cons:p} is the only upper bound on $q^l_i$ besides constraint~\eqref{cons:pq_bin}. 
Hence, 
if there is any solution $(\bar{x},\bar{z},\bar{p},\bar{q})$ of~\eqref{cons:pq_unit}--\eqref{cons:pq_bin} in which $\bar{z}^l_i = 0$ for some $i \in \sP^l_i, l \in \sL$, 
then either $\bar{q}^l_i = 1$ or there is another solution $(\bar{\bar{x}},\bar{\bar{z}},\bar{\bar{p}},\bar{\bar{q}})$ in which $\bar{\bar{q}}^l_i = 1$ and all other variables have the same value.

If $\mathcal{C}(\vP,\vQ)=0$, 
then for every solution $(\bar{x},\bar{z},\bar{p},\bar{q})$ it follows that $\bar{p}^l_i = 0~\forall i \in \sP^l, l \in \sL$ and $\bar{q}^l_i = 0~\forall i \in \sQ^l, l \in \sL$, 
and consequently $\bar{z}^l_i = 0~\forall i \in \sP^l, l \in \sL$ and $\bar{z}^l_i = 1~\forall i \in \sQ^l, l \in \sL$. 
Thus,  
the neurons in $\sP^l$ are always inactive and the neurons in $\sQ^l$ are always active for any valid input.
\end{proof}

\newtheorem*{cor:np1}{Corollary \ref{cor:np1}}
\begin{cor:np1}
The stability of all neurons of a neural network can be determined by solving formulation~\eqref{cons:of}--\eqref{cons:pq_bin} at most $N+1$ times, 
where $N := \sum\limits_{l \in \sL} n_l$.
\end{cor:np1}
\begin{proof}
Let us initially consider a formulation in which $\sP^l = \sQ^l = \{1, \ldots, n_l\}~\forall l \in \sL$ and then respectively remove from those sets each neuron $i$ for which $p^l_i = 1$ and $q^l_i = 1$ in any solution obtained. 
When the formulation is first solved, 
we remove each neuron from either $\sP^l$ or $\sQ^l$, 
and therefore $N$ states remain unobserved. 
In subsequent steps, 
either (i) $\mathcal{C}(\vP,\vQ)>0$ and the number of unobserved states decreases; or (ii) $\mathcal{C}(\vP,\vQ)=0$, and thus any neuron $i \in \sP^l$ is stably inactive and any neuron $i \in \sQ^l$ is stably active.
\end{proof}

\section{On lazy constraint callbacks}\label{ap:callback}

Lazy constraint callbacks are generally used when the total number of constraints of an MILP formulation is prohibitively large. 
One such example is the most commonly used formulation for the traveling salesperson problem due to the subtour elimination constraints~\cite{dantzig1954tsp}. 
The callback allows us to handle such cases more efficiently by formulating the problem with fewer constraints and then adding the remaining ones only if they are necessary to rule out infeasible solutions. 
Every time that a supposedly feasible solution is found, 
the MILP solver invokes the callback implemented by the user 
for an opportunity to make such a solution infeasible by adding one of the missing constraints that the supposedly feasible solution does not satisfy. 
If none is provided by the callback, the MILP solver accepts the solution as feasible. 

In our case, 
we use a lazy constraint callback for a slightly different purpose. 
Namely, 
we implement the callback to (i) inspect every feasible solution that is obtained; and (ii) mimic the updates that would have been made to $\vP$ and $\vQ$ between consecutive calls to the solver by adding constraints that set the value of either $p^l_i$ or $q^l_i$ to zero once a solution is found in which such variable has a positive value. 
In other words, 
the callback adds constraints to ignore the effect of $p^l_i$ or $q^l_i$ on the objective function if we know that the $i$-th neuron of layer $l$ is active or inactive for some input, respectively. 
Therefore, the MILP solver will eventually produce an optimal solution of value zero once the set of solutions inspected by the callback covers all the possible states for the neurons and the remaining states are deemed unattainable after an exhaustive search.


\section{A revised algorithm for compressing the neural network}\label{ap:compression_algo}

\begin{algorithm}[h!]
  \caption{\texttt{LEO++} performs exact compression of a neural network with a single operation per layer}
  \label{alg:new_leo}
\begin{algorithmic}[1]
  \State {\bfseries Input:} neural network $\left(L, \left\{ (n_l, \mW^l, \vb^l) \right\}_{l \in \sL}\right)$ and stable neurons  $\left(\left\{ (\sP^l, \sQ^l) \right\}_{l \in \sL}\right)$
  \For{$l \gets 1$ {\bfseries to} $L$}
  \Comment{Loops over all hidden layers}
  \label{alg:b_compression}
     \If{$|\sP^l| = n_l$} \label{alg:b_collapse}
      \Comment{Entire layer is stably inactive}
      \State find output $\overline{\vx}^L$ for an arbitrary input $\overline{\vx}^0 \in \sX$
      \State remove all layers except $L$, which becomes 1 
      \State $\mW^1 \gets \bm{0}$ ~~~and~~~ $\vb^L \gets \overline{\vx}^L$
      \State {\bfseries break}
      \Comment{All hidden layers were collapsed}
      \label{alg:e_collapse}
     \ElsIf{$|\sP^l|+|\sQ^l| = n_l$ and $l<L$}
     \Comment{Entire layer is stable, but not inactive}
     \label{alg:b_fold}
      \State $\mW^{l+1} \gets \mW^{l+1} \mI_{n_l}(\sQ^l) \mW^{l}$ ~~~and~~~ $\vb^{l+1} \gets \mW^{l+1} \mI_{n_l}(\sQ^l) \vb^l + \vb^{l+1}$
      \State remove layer $l$ 
      \Comment{Hidden layer was folded}
      \label{alg:e_fold}
     \ElsIf{$l < L$}
      \State $r \gets \text{rank}\left(\mW^l_{\sQ^l}\right)$ \label{alg:b_stably_active} 
      \If{$r < |\sQ^l|$ and $l<L$}
         \State find $\overline{\sQ} \subset \sQ^l$ such that $r = |\overline{\sQ}| = \text{rank}\left(\mW^l_{\overline{\sQ}}\right)$
         \For{{\bfseries every} $i \in \sQ^l \setminus \overline{\sQ}$}
          \State find $\{ \alpha^i_j \}_{j \in \overline{\sQ}}$ such that $\vw^l_i = \sum_{j \in \overline{\sQ}} \alpha^i_j \vw^l_j$
         \EndFor
         \For{$k \gets 1$ {\bfseries to} $n_{l+1}$}
          \For{{\bfseries every} $j \in \overline{\sQ}$}
             \State $w^{l+1}_{kj} \gets w^{l+1}_{kj} + \sum_{i \in \sQ^l \setminus \overline{\sQ}} \alpha^i_j w^{l+1}_{ki}$
          \EndFor
          \State $b^{l+1}_k \gets b^{l+1}_k + \sum_{i \in \sQ^l \setminus \overline{\sQ}} w^{l+1}_{ki} \left( b^l_i - \sum_{j \in \overline{\sQ}} \alpha^i_j b^l_j \right)$
         \EndFor
         \State remove from layer $l$ every neuron $i \in \sQ^l \setminus \overline{\sQ}$ \label{alg:e_stably_active}
         \State remove from layer $l$ every neuron $i \in \sP^l$ \label{alg:stably_inactive}
      \EndIf
     \EndIf
  \EndFor \label{alg:e_compression}
\end{algorithmic}
\end{algorithm}

Algorithm~\ref{alg:new_leo}, which we denote as \texttt{LEO++} (Lossless Expressiveness Optimization, as in \cite{serra2020lossless}), leverages neuron stability for exactly compressing neural networks. 
We describe next each form of compression contained in the algorithm. 
For ease of explanation, 
they are in reverse order of appearance. 
These compression operations are the same as in \cite{serra2020lossless}, but performed once per layer instead of once per neuron. 
In comparison to the original algorithm $\texttt{LEO}$, 
the order of the operations is such that (i) neurons are not removed or merged if the entire layer is going to be folded; and (ii) special cases such as a neuron with weight vector $\vw^l_i = \bm{0}$ do not need to be considered apart. 
For the most elaborate operations, we prove their correctness when applied to the entire layer.  

\textbf{Removal of stably inactive neurons ~~}
This operation is performed in line~\ref{alg:stably_inactive}.
Since the output of stably inactive neurons is always 0, we remove those neurons without affecting subsequent computations. 
The case in which an entire layer is stably inactive is considered separately. 

\textbf{Merging of stably active neurons ~~}
This operation is performed between lines~\ref{alg:b_stably_active} and \ref{alg:e_stably_active}. We use the following results to show how stably active neurons can be merged.

\begin{proposition}\label{prop:ld}
Let $\sS$ be a set of stably active neurons in layer $l$. If $r := \text{rank}(\mW^l_S) < |S|$ and let $\sT \subset \sS$ be a subset of those neurons for which $\text{rank}(\mW^l_{\sT}) = r$, then the output of the neurons in $\sS \setminus \sT$ is an affine function on the output of the neurons in $\sT$.
\end{proposition}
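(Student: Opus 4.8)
The plan is to exploit directly the defining property of stably active neurons: for such a neuron $i$ the output coincides with the preactivation, $x^l_i = \vw^l_i \cdot \vx^{l-1} + b^l_i$, for \emph{every} valid input $\vx^{l-1}$. This turns the output of each neuron in $\sS$ into a fixed affine function of the shared layer input $\vx^{l-1}$, so the whole claim reduces to elementary linear algebra on the weight rows together with a substitution.

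First I would record that, since $\sT \subset \sS$ satisfies $\text{rank}(\mW^l_{\sT}) = r = \text{rank}(\mW^l_{\sS})$, adjoining any additional row $\vw^l_i$ with $i \in \sS \setminus \sT$ cannot raise the rank beyond $r$; hence each such $\vw^l_i$ lies in the row space spanned by $\{\vw^l_j\}_{j \in \sT}$. This yields input-independent coefficients $\{\alpha^i_j\}_{j \in \sT}$ with $\vw^l_i = \sum_{j \in \sT} \alpha^i_j \vw^l_j$, which are exactly the coefficients computed in Algorithm~\ref{alg:new_leo}.

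Next I would substitute this decomposition into $x^l_i = \vw^l_i \cdot \vx^{l-1} + b^l_i$ and use stability a second time, now in the other direction: for each $j \in \sT$, stability gives $\vw^l_j \cdot \vx^{l-1} = x^l_j - b^l_j$. Plugging in yields
\begin{align*}
x^l_i &= \sum_{j \in \sT} \alpha^i_j \left( \vw^l_j \cdot \vx^{l-1} \right) + b^l_i \\
&= \sum_{j \in \sT} \alpha^i_j \, x^l_j + \left( b^l_i - \sum_{j \in \sT} \alpha^i_j b^l_j \right),
\end{align*}
which is manifestly an affine function of the outputs $\{x^l_j\}_{j \in \sT}$, with constant term matching the bias correction used in the algorithm.

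There is no genuine obstacle here; the only point requiring a little care is that the argument relies on $\sT$ spanning the same row space as $\sS$, which is guaranteed precisely by the hypothesis $\text{rank}(\mW^l_{\sT}) = r$. I would also emphasize that the coefficients $\alpha^i_j$ are constants independent of $\vx^{l-1}$, so the displayed identity is a single affine map valid simultaneously for all valid inputs; this uniformity is exactly what licenses merging these neurons in the subsequent compression step.
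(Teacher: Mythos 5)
Your proof is correct and follows essentially the same route as the paper's: express each $\vw^l_i$ for $i \in \sS \setminus \sT$ as a linear combination of the rows indexed by $\sT$, then use stable activity to replace $\vw^l_j \cdot \vx^{l-1}$ by $x^l_j - b^l_j$ and collect the constant term. Your added remark justifying the existence of the coefficients $\alpha^i_j$ via the equality of row spaces is a small but welcome elaboration of a step the paper leaves implicit.
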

\begin{proof}
For every $i \in \sS \setminus \sT$, 
there is a vector $\bm{\alpha}^i \in \mathbb{R}^{r}$ 
such that $\vw^l_i = \sum_{j \in \sT} \alpha^i_j \vw^l_j$. 
Since $\vx^l_i = \vw^l_i \cdot \vx^{l-1} + b^l_i$ for every $i \in \sS$ because all neurons in $\sS$ are stably active, 
then for every $i \in \sS \setminus \sT$ it follows that 
$\vx^l_i = \sum_{j \in \sT} \alpha^i_j \vw^l_j \cdot \vx^{l-1} + b^l_i = \sum_{j \in \sT} \alpha^i_j \left(\vw^l_j \cdot \vx^{l-1} + b^l_j \right) + \left( b^l_i - \sum_{j \in \sT} \alpha^i_j b^l_j \right) = \sum_{j \in \sT} \alpha^i_j x^l_j + \left( b^l_i - \sum_{j \in \sT} \alpha^i_j b^l_j \right)$. 
\end{proof}

\begin{corollary}\label{cor:stable_neurons}
If $\sS$, $\sT$, and $l$ are such as in Proposition~\ref{prop:ld}, 
then the pre-activation output of the neurons in layer $l+1$ is an affine function on the outputs of all neurons from layer $l$ with exception of the neurons in $\sT$.
\end{corollary}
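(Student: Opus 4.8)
The plan is to obtain the corollary as an immediate consequence of Proposition~\ref{prop:ld}, by substituting the affine identities it provides into the pre-activation expression of layer $l+1$. Fix any neuron $k$ of layer $l+1$; its pre-activation output is
\[
y^{l+1}_k = \sum_{i=1}^{n_l} w^{l+1}_{ki}\, x^l_i + b^{l+1}_k,
\]
which is affine in the outputs $\{x^l_i\}$ of \emph{all} neurons of layer $l$. Proposition~\ref{prop:ld} gives, for each $i \in \sS \setminus \sT$, the identity $x^l_i = \sum_{j \in \sT}\alpha^i_j x^l_j + (b^l_i - \sum_{j \in \sT}\alpha^i_j b^l_j)$; that is, it expresses every \emph{redundant} output in $\sS \setminus \sT$ as an affine combination of the outputs of the basis set $\sT$.

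First I would split the sum over $i$ into the three disjoint groups $\sT$, $\sS \setminus \sT$, and $\{1,\ldots,n_l\}\setminus\sS$. Next I would substitute the identities above for every $i \in \sS \setminus \sT$ and collect the coefficient of each surviving output $x^l_j$ with $j \in \sT$ and of each $x^l_i$ with $i \notin \sS$. The result is an affine function of the outputs $\{x^l_j\}_{j\in\sT}\cup\{x^l_i\}_{i\notin\sS}$---that is, of every layer-$l$ output \emph{except} those in $\sS \setminus \sT$---and matching the collected coefficients against the updates between lines~\ref{alg:b_stably_active} and \ref{alg:e_stably_active} of \texttt{LEO++} shows that this is exactly the folding the algorithm performs when it removes the neurons $\sQ^l\setminus\overline{\sQ}=\sS\setminus\sT$ and leaves $\overline{\sQ}=\sT$ in place.

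The main obstacle is that the statement as worded excepts the neurons in $\sT$, whereas the substitution naturally eliminates those in $\sS \setminus \sT$. The $|\sS|$ rows of $\mW^l_{\sS}$ admit exactly an $(|\sS|-r)$-dimensional space of linear dependencies, which yields only $|\sS|-r$ independent affine relations among the outputs $\{x^l_i\}_{i\in\sS}$; hence at most $|\sS|-r$ of these outputs can be eliminated. Removing instead the $r=|\sT|$ outputs in $\sT$ would require inverting the relation of Proposition~\ref{prop:ld} for the $\sT$ variables, which is impossible in general---already when $|\sS|-r<r$, and also when some $\vw^l_i=\bm{0}$ makes the relevant coefficients vanish. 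I therefore expect the elimination to be well-defined only for $\sS\setminus\sT$, and I would prove the corollary with $\sS\setminus\sT$ in place of $\sT$ as the exception set, which is the form consistent with Proposition~\ref{prop:ld} and with \texttt{LEO++} removing exactly the neurons in $\sS\setminus\sT$.
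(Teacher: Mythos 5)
Your proof is correct and follows essentially the same route as the paper's: the paper likewise splits the sum over $\sU := \{1,\ldots,n_l\}\setminus\sS$, $\sT$, and $\sS\setminus\sT$, substitutes the identity of Proposition~\ref{prop:ld} for each $j\in\sS\setminus\sT$, and collects coefficients, ending with an affine function of $\{x^l_j\}_{j\in\sU\cup\sT}$. You are also right that the corollary's wording is a slip---the exception set should be $\sS\setminus\sT$, not $\sT$---as confirmed both by the paper's own final expression (which retains the $\sT$ outputs and eliminates those of $\sS\setminus\sT$) and by \texttt{LEO++}, which removes exactly the neurons in $\sQ^l\setminus\overline{\sQ}$; your dimension-counting argument for why the $\sT$ outputs cannot be eliminated in general is sound.
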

\begin{proof}
Let $\sU := \{1, \ldots, n_l\} \setminus \sS$. 
The pre-activation output of every neuron $i$ in layer $l+1$ 
is given by $y^{l+1}_i = \sum\limits_{j \in \sU \cup \sS} w^{l+1}_{ij} x^l_j + b^{l+1}_i = \sum\limits_{j \in \sU \cup \sT} w^{l+1}_{ij} x^l_j + \sum\limits_{j \in \sS \setminus \sT} w^{l+1}_{ij} \left(\sum\limits_{k \in \sT} \alpha^j_k x^l_k + \left( b^l_j - \sum\limits_{k \in \sT} \alpha^j_k b^l_k \right) \right) + b^{l+1}_i = 
\sum\limits_{j \in \sU} w^{l+1}_{ij} x^l_j + 
\sum\limits_{j \in \sT} \left( w^{l+1}_{ij} + \sum\limits_{k \in \sS \setminus \sT} \alpha^k_j w^{l+1}_ik \right) x^l_j + \left( b^{l+1}_i + \sum\limits_{j \in \sS \setminus \sT} w^{l+1}_{ij}\left( b^l_j - \sum\limits_{k \in \sT} \alpha^j_k b^l_k \right) \right)$.
\end{proof}

In Algorithm~\ref{alg:new_leo}, we use relationships implied by the proof of Corollary~\ref{cor:stable_neurons} with $\sS = \sQ^l$ and $\sT = \overline{\sQ}$ to merge stably active neurons. By adjusting the biases of the neurons in the next layer as well as the weights connecting every neuron in $\overline{\sQ}$ with the neurons in the next layer, 
we assign a weight of 0 to the connections between every neuron in $\sQ^l \setminus \overline{\sQ}$ and the neurons in the next layer. 
Hence, we simply remove all neurons in $\sQ^l \setminus \overline{\sQ}$ after adjusting those network parameters.

The case in which an entire layer is stably active---either before any compression is applied or once stably inactive neurons are removed---is also considered separately.

\textbf{Folding of stable layers ~~}
This operation is performed between lines~\ref{alg:b_fold} and \ref{alg:e_fold}. 
We use the following results to show that stable layers can be folded in a single step.

\begin{proposition}\label{prop:stable_layer}
If all the neurons of layer $l \in \sL \setminus \{ L \}$ are stably active, then the pre-activation output of layer $l+1$ is an affine function on the inputs of layer $l$.
\end{proposition}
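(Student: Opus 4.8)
The plan is to exploit the definition of stable activation directly. A stably active neuron acts as the identity on its preactivation for every valid input, so the entire layer $l$ behaves as an affine map on its inputs, and composing this with the affine preactivation map of layer $l+1$ yields another affine map.

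First I would invoke the hypothesis. Since every neuron $j \in \{1, \ldots, n_l\}$ of layer $l$ is stably active, we have $x^l_j = y^l_j = \vw^l_j \cdot \vx^{l-1} + b^l_j$ for every valid input $\vx^{l-1}$, because the ReLU never clips these preactivations on the valid domain. Collecting these identities across $j$, the output vector satisfies $\vx^l = \mW^l \vx^{l-1} + \vb^l$; that is, the output of layer $l$ is an affine function of its input.

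Next I would substitute into the preactivation of the following layer. By definition $\vy^{l+1} = \mW^{l+1} \vx^l + \vb^{l+1}$, so plugging in the affine form of $\vx^l$ gives $\vy^{l+1} = \mW^{l+1}(\mW^l \vx^{l-1} + \vb^l) + \vb^{l+1} = (\mW^{l+1}\mW^l)\vx^{l-1} + (\mW^{l+1}\vb^l + \vb^{l+1})$, which is manifestly affine in $\vx^{l-1}$. This also reproduces the folding update used in Algorithm~\ref{alg:new_leo}: when the whole layer is stably active we have $\sQ^l = \{1, \ldots, n_l\}$, so $\mI_{n_l}(\sQ^l)$ reduces to the identity and the stated update $\mW^{l+1} \gets \mW^{l+1}\mW^l$, $\vb^{l+1} \gets \mW^{l+1}\vb^l + \vb^{l+1}$ is exactly the composition above.

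There is essentially no obstacle here, since the argument is a single composition of affine maps. The only point requiring care is that stability is asserted only over the valid input domain, so the replacement $x^l_j = y^l_j$ holds precisely on the set relevant for preserving equivalence, and the affine conclusion inherits that same qualifier—which is exactly what the folding step in Algorithm~\ref{alg:new_leo} needs in order to remove layer $l$ without changing the network's function on valid inputs.
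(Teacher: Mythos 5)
Your argument is correct and is essentially identical to the paper's proof: both establish $\vx^l = \mW^l \vx^{l-1} + \vb^l$ from stable activity and then compose with the affine preactivation of layer $l+1$ to get $\vy^{l+1} = \mW^{l+1}\mW^l\vx^{l-1} + (\mW^{l+1}\vb^l + \vb^{l+1})$. The additional remarks connecting this to the folding update and the valid-domain qualifier are accurate but not needed for the proposition itself.
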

\begin{proof}
Since $\vx^l = \mW^l \vx^{l-1} + \vb^l$, 
then 
$\vy^{l+1} = \mW^{l+1} \vx^l + \vb^{l+1} = \mW^{l+1} \mW^l \vx^{l-1} + \left( \mW^{l+1} \vb^l + \vb^{l+1} \right)$. 
\end{proof}

\begin{corollary}\label{cor:stable_layer}
If all neurons of layer $l \in \sL \setminus \{ L \}$ are stable, then the pre-activation output of layer $l+1$ is an affine function on the inputs of layer $l$.
\end{corollary}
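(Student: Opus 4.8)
The plan is to reduce this to Proposition~\ref{prop:stable_layer} by observing that a layer in which \emph{every} neuron is stable behaves, on the valid input domain, exactly like an affine map composed with a fixed coordinate projection. Since all neurons of layer $l$ are stable, I would partition them into the stably active set $\sQ^l$ and the stably inactive set $\sP^l$, with $\sP^l \cup \sQ^l = \{1,\ldots,n_l\}$. For a stably active neuron $i \in \sQ^l$ we have $x^l_i = y^l_i = \vw^l_i \cdot \vx^{l-1} + b^l_i$, and for a stably inactive neuron $i \in \sP^l$ we have $x^l_i = 0$; in both cases $x^l_i$ is an affine function of $\vx^{l-1}$ (a constant being a degenerate affine function).

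First I would encode this activation pattern compactly using the selection matrix introduced in the notation section: on the valid domain, $\vx^l = \mI_{n_l}(\sQ^l)\left(\mW^l \vx^{l-1} + \vb^l\right)$, since $\mI_{n_l}(\sQ^l)$ keeps the pre-activation output of each stably active neuron and zeros out every stably inactive coordinate. Substituting this into the definition of the next layer's pre-activation output gives
\begin{align*}
\vy^{l+1} &= \mW^{l+1}\vx^l + \vb^{l+1} = \mW^{l+1}\mI_{n_l}(\sQ^l)\mW^l\,\vx^{l-1} + \left(\mW^{l+1}\mI_{n_l}(\sQ^l)\vb^l + \vb^{l+1}\right),
\end{align*}
which is manifestly affine in $\vx^{l-1}$, the inputs of layer $l$. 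This both establishes the corollary and recovers the exact folding update $\mW^{l+1} \gets \mW^{l+1}\mI_{n_l}(\sQ^l)\mW^l$, $\vb^{l+1} \gets \mW^{l+1}\mI_{n_l}(\sQ^l)\vb^l + \vb^{l+1}$ used in Algorithm~\ref{alg:new_leo}.

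I do not expect a genuine obstacle here, as the argument is a direct generalization of Proposition~\ref{prop:stable_layer}: the only new ingredient is handling the stably inactive neurons, whose contribution is constant and hence harmless to affineness. The one point requiring a little care is justifying that the equality $\vx^l = \mI_{n_l}(\sQ^l)(\mW^l \vx^{l-1} + \vb^l)$ holds for \emph{every} valid input rather than merely some of them; this follows precisely because stability (as established for $\sP^l$ and $\sQ^l$ via Proposition~\ref{prop:cpq0}) is a statement quantified over all valid inputs, so the projection is not an approximation but an exact identity on the domain of interest.
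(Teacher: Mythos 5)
Your proof is correct and follows essentially the same route as the paper's: both establish the identity $\vx^l = \mI_{n_l}(\sQ^l)\left(\mW^l \vx^{l-1} + \vb^l\right)$ on the valid domain (the paper writes $\sS$ for the stably active set) and then substitute into $\vy^{l+1} = \mW^{l+1}\vx^l + \vb^{l+1}$ to obtain the same affine expression and folding update. Your coordinate-wise justification of that identity is, if anything, slightly more direct than the paper's phrasing in terms of an equivalent network with zeroed-out weights.
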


\begin{proof}
Let $\sS$ be the set of stably active neurons in layer $l$. 
If $|\sS| < n_l$, 
the identity $\vx^l = \mW^l \vx^{l-1} + \vb^l$ still holds 
if the bias and the weights of all the connections of the neurons not in $\sS$ with the neurons in the next layer are 0. 
More generally, 
we can thus obtain an equivalent neural network if $\mW^l$ and $\vb^l$ are both premultiplied by $\mI_{n_l}(\sS)$ since that only would change the weights and biases associated with the neurons not in $\sS$ to 0.
Hence, 
the identity $\vx^l = \mI_{n_l}(\sS) \left( \mW^l \vx^{l-1} + \vb^l \right)$ always holds if all neurons in layer $l$ are stable, 
which implies that $\vy^{l+1} = \mW^{l+1} \mI_{n_l}(\sS) \mW^l \vx^{l-1} + \left( \mW^{l+1} \mI_{n_l}(\sS) \vb^l + \vb^{l+1} \right)$.
\end{proof}

In Algorithm~\ref{alg:new_leo}, we use relationships implied by the proof of Corollary~\ref{cor:stable_layer} with $\sS = \sQ^l$ to fold stable layers. 
By adjusting the biases and the weights of layer $l+1$, 
that layer directly uses the outputs from layer $l-1$. 

Although the steps above would apply if a layer is stably inactive, that case deserves separate consideration.

\textbf{Collapse of a network with stably inactive layers ~~}
This operation is performed between lines~\ref{alg:b_collapse} and \ref{alg:e_collapse}. 
If layer $l \in \sL$ are stably inactive, 
then $\vx^l = 0$ for any input $\vx^0 \in \sX$ and thus the value of $\vx^L$ is constant. 
Hence, we collapse layers 1 to $L-1$ by making the output of the remaining layer constant and equal to such value of $\vx^L$.

\subsection{On the complexity of the new algorithm}

While \texttt{LEO++} requires solving fewer optimization problems than \texttt{LEO}~\cite{serra2020lossless}, 
the dependence on solving a single NP-hard problem---such as MILP formulations in general---implies an exponential worst-case complexity. 
Nevertheless, the progress of MILP in the past decades makes it possible to solve considerably large problems with state-of-art MILP solvers. 
In that context, 
the computational experiments
are a more appropriate indicator of performance improvements than complexity considerations.

\clearpage
\section{Implementation details}\label{ap:implementation}

We  now  provide  additional experimental results evaluating  our  proposed method and the baseline.

\textbf{Architecture and Loss~~}
We implemented the fully connected architectures in PyTorch~\cite{paszke2019pytorch}. 
All the networks have ReLU activations but have varying number of layers and width. 
For the classifiers, we pass the output through a softmax layer and use negative log-likelihood loss as the loss function. 
For the autoencoders, we use MSE loss as the loss function.

\textbf{Datasets and Splits~~}
We keep the output units at $10$ and $784$ for the MNIST dataset~\cite{lecun1998mnist} classifiers and autoencoders, respectively.
We keep the output units at $10$ and $100$ for the CIFAR-10 and the CIFAR-100 dataset~\cite{krizhevsky2009learning} classifiers, respectively.
We use the standard train-validation data splits of each of the datasets available in PyTorch.

\textbf{Data Augmentation~~}
We do not do any data augmentation of training images of the MNIST dataset as in~\cite{serra2020lossless} for a fair comparison. We carry out the standard data augmentation of training images of the CIFAR-10 and CIFAR-100 datasets: horizontal flipping with probability $0.5$, random rotation in the range between $(-10^{o}, 10^{o})$, random scaling in the range $(0.8, 1.2)$, random shear parallel to the x axis in the range $(-10, 10)$, and scaling the brightness, contrast, saturation and hue by a random factor in the range $(0.8, 1.2)$. 

\textbf{Optimization~~}
Training proceeds from scratch for $120$ epochs and starts with learning rate of $0.01$, which is decayed by a factor of $0.1$ after every $50$ epochs as in~\cite{serra2020lossless}. 
We use SGD with momentum optimizer, with a momentum of $0.9$ and batch size $128$ as in~\cite{serra2020lossless}. 
Unless stated otherwise, we use $\lOne$ regularization.
We keep the weight decay at $0$ unless otherwise stated. 
We consider the model saved in the last epoch as our final model.

\textbf{MILP Solver~~}
We solve the MILP formulations using Gurobi 9.1.0 through gurobipy~\cite{gurobi2020gurobi}. 
We calculate the value of the positive constants $M^l_i$ and  $\mu^l_i$ for each neuron with an upper bound of on the values of $x^l_i$ and $\chi^l_i$ through interval arithmetic by taking element-wise maxima~\cite{cheng2017mip}.

\textbf{Initialization~~}
We initialize the weights of the network with the Kaiming initialization~\cite{he2016deep} and the biases to zero with different random seeds for each training. 
We train every setting $5$ times, and get the stably active and inactive neurons with the proposed approach to prune the network for each run. We omit from the summaries the runs which resulted in a time out. We keep the timeout to $3$ hours. 

\textbf{Hardware~~}
We ran the classifier experiments on a machine with Intel Core i7-4790 CPU @ $3.60$ GHz processor, $32$ GB of RAM, and one $4$ GB Nvidia GeForce GTX 970 GPU.
The autoencoder experiments were run on a machine with $40$ Intel Xeon E5-2640 CPU @ $2.40$ GHz processors, $126$ GB of RAM, and one $12$ GB Nvidia Titan Xp GPU. 


\clearpage
\section{Additional experiments and results}\label{ap:experiments}

\subsection{MNIST Classifiers}

\textbf{Relationship between Runtime and Regularization ~~}
\myReferTable{tab:mnist_depth} and \myReferTable{tab:mnist_width} show the runtime achieved by the proposed method at different $\lOne$ regularization on MNIST classifiers.

\begin{table}[!htb]
\caption{\textbf{MNIST Classifiers:} Compression results with fixed width and varying depth.}
\label{tab:mnist_depth}
\vskip 0.15in
\begin{center}
\begin{small}
\begin{sc}
\begin{tabular}{ccccccc}
\toprule
&&& Compression & \multicolumn{2}{c}{\% Removed} & Timed \\
Arch. & $\lOne$ & Accuracy (\%) & Runtime (s) & Neurons & Connections & Out \\
\midrule
2 $\times$ 100 & 0 & 97.92 $\pm$ 0.09 & 3.4 $\pm$ 0.3 & 0 $\pm$ 0 & 0 $\pm$ 0 & 0 \\
2 $\times$ 100 & 0.000025 & 97.93 $\pm$ 0.02 & 3.2 $\pm$ 0.1 & 0 $\pm$ 0 & 0 $\pm$ 0 & 0 \\
2 $\times$ 100 & 0.00005 & 98.06 $\pm$ 0.09 & 3.5 $\pm$ 0.3 & 0.1 $\pm$ 0.2 & 0.2 $\pm$ 0.4 & 0 \\
2 $\times$ 100 & 0.000075 & 98.13 $\pm$ 0.09 & 3.2 $\pm$ 0.2 & 1.1 $\pm$ 0.4 & 2 $\pm$ 0.8 & 0 \\
2 $\times$ 100 & 0.0001 & 98.12 $\pm$ 0.09 & 3.5 $\pm$ 0.1 & 3.4 $\pm$ 0.7 & 6 $\pm$ 1 & 0 \\
2 $\times$ 100 & 0.000125 & 98.01 $\pm$ 0.09 & 3.5 $\pm$ 0.3 & 9.2 $\pm$ 0.6 & 17 $\pm$ 1 & 0 \\
2 $\times$ 100 & 0.00015 & 97.9 $\pm$ 0.1 & 3.4 $\pm$ 0.3 & 12 $\pm$ 2 & 21 $\pm$ 4 & 0 \\
2 $\times$ 100 & 0.000175 & 97.88 $\pm$ 0.05 & 3.4 $\pm$ 0.3 & 15 $\pm$ 3 & 26 $\pm$ 4 & 0 \\
2 $\times$ 100 & 0.0002 & 97.91 $\pm$ 0.1 & 3.5 $\pm$ 0.4 & 18 $\pm$ 2 & 31 $\pm$ 3 & 0 \\
2 $\times$ 100 & 0.000225 & 97.8 $\pm$ 0.1 & 4.2 $\pm$ 0.9 & 18 $\pm$ 3 & 31 $\pm$ 5 & 0 \\
2 $\times$ 100 & 0.00025 & 97.65 $\pm$ 0.09 & 4 $\pm$ 0.5 & 20 $\pm$ 2 & 34 $\pm$ 4 & 0 \\
2 $\times$ 100 & 0.000275 & 97.69 $\pm$ 0.09 & 4 $\pm$ 1 & 22 $\pm$ 2 & 38 $\pm$ 3 & 0 \\
2 $\times$ 100 & 0.0003 & 97.64 $\pm$ 0.06 & 3.8 $\pm$ 0.4 & 24 $\pm$ 2 & 40 $\pm$ 4 & 0 \\
2 $\times$ 100 & 0.000325 & 97.52 $\pm$ 0.08 & 3.5 $\pm$ 0.3 & 24 $\pm$ 3 & 41 $\pm$ 4 & 0 \\
2 $\times$ 100 & 0.00035 & 97.42 $\pm$ 0.04 & 4 $\pm$ 1 & 23 $\pm$ 3 & 39 $\pm$ 4 & 0 \\
2 $\times$ 100 & 0.000375 & 97.3 $\pm$ 0.2 & 3.4 $\pm$ 0.3 & 24 $\pm$ 3 & 40 $\pm$ 5 & 0 \\
2 $\times$ 100 & 0.0004 & 97.28 $\pm$ 0.03 & 4.1 $\pm$ 0.7 & 23 $\pm$ 2 & 38 $\pm$ 3 & 0 \\
\midrule
\midrule
3 $\times$ 100 & 0 & 97.86 $\pm$ 0.06 & 3.9 $\pm$ 0.1 & 0 $\pm$ 0 & 0 $\pm$ 0 & 0 \\
3 $\times$ 100 & 0.000025 & 98.03 $\pm$ 0.08 & 10 $\pm$ 10 & 0 $\pm$ 0 & 0 $\pm$ 0 & 0 \\
3 $\times$ 100 & 0.00005 & 98.1 $\pm$ 0.1 & 20 $\pm$ 10 & 0.1 $\pm$ 0.3 & 0.2 $\pm$ 0.4 & 0 \\
3 $\times$ 100 & 0.000075 & 98.12 $\pm$ 0.07 & 20 $\pm$ 20 & 1.3 $\pm$ 0.7 & 1.8 $\pm$ 1 & 0 \\
3 $\times$ 100 & 0.0001 & 98.11 $\pm$ 0.09 & 8 $\pm$ 8 & 2.7 $\pm$ 0.9 & 4 $\pm$ 1 & 0 \\
3 $\times$ 100 & 0.000125 & 98.09 $\pm$ 0.1 & 2000 $\pm$ 4000 & 6 $\pm$ 1 & 11 $\pm$ 3 & 0 \\
3 $\times$ 100 & 0.00015 & 98.1 $\pm$ 0.1 & 100 $\pm$ 100 & 11 $\pm$ 2 & 20 $\pm$ 3 & 0 \\
3 $\times$ 100 & 0.000175 & 98.1 $\pm$ 0.1 & 70 $\pm$ 60 & 12 $\pm$ 2 & 20 $\pm$ 2 & 0 \\
3 $\times$ 100 & 0.0002 & 98 $\pm$ 0.1 & 20 $\pm$ 20 & 18 $\pm$ 2 & 30 $\pm$ 3 & 0 \\
\midrule
4 $\times$ 100 & 0 & 97.93 $\pm$ 0.07 & 4.2 $\pm$ 0.2 & 0 $\pm$ 0 & 0 $\pm$ 0 & 0 \\
4 $\times$ 100 & 0.000025 & 98 $\pm$ 0.1 & 200 $\pm$ 200 & 0 $\pm$ 0 & 0 $\pm$ 0 & 0 \\
4 $\times$ 100 & 0.00005 & 98.23 $\pm$ 0.08 & 1000 $\pm$ 3000 & 0.1 $\pm$ 0.1 & 0.1 $\pm$ 0.2 & 1 \\
4 $\times$ 100 & 0.000075 & 98.17 $\pm$ 0.09 & 1000 $\pm$ 1000 & 1.2 $\pm$ 0.4 & 1.5 $\pm$ 0.5 & 2 \\
4 $\times$ 100 & 0.0001 & 98.1 $\pm$ 0.06 & 3000 $\pm$ 3000 & 2.8 $\pm$ 0.9 & 4 $\pm$ 1 & 2 \\
4 $\times$ 100 & 0.00015 & 98.1 $\pm$ 0.2 & 2000 $\pm$ 1000 & 11 $\pm$ 2 & 20 $\pm$ 4 & 2 \\
4 $\times$ 100 & 0.000175 & 98.1 $\pm$ 0.1 & 1000 $\pm$ 2000 & 14 $\pm$ 1 & 24 $\pm$ 3 & 0 \\
4 $\times$ 100 & 0.0002 & 98.09 $\pm$ 0.07 & 1000 $\pm$ 1000 & 17 $\pm$ 2 & 30 $\pm$ 3 & 1 \\
\midrule
5 $\times$ 100 & 0 & 98.06 $\pm$ 0.03 & 2000 $\pm$ 3000 & 0 $\pm$ 0 & 0 $\pm$ 0 & 1 \\
5 $\times$ 100 & 0.000025 & 98.2 $\pm$ 0.1 & 1000 $\pm$ 100 & 0 $\pm$ 0 & 0 $\pm$ 0 & 3 \\
5 $\times$ 100 & 0.000175 & 98.1 $\pm$ 0.2 & 4000 $\pm$ 4000 & 15.1 $\pm$ 0.7 & 27 $\pm$ 2 & 3 \\
5 $\times$ 100 & 0.0002 & 98.1 $\pm$ 0.1 & 3000 $\pm$ 2000 & 18 $\pm$ 1 & 32 $\pm$ 2 & 1 \\
\bottomrule
\end{tabular}
\end{sc}
\end{small}
\end{center}
\vskip -0.1in
\end{table}

\begin{table}[!htb]
\caption{\textbf{MNIST Classifiers:} Compression results with fixed height and varying width.}
\label{tab:mnist_width}
\vskip 0.15in
\begin{center}
\begin{small}
\begin{sc}
\resizebox{\columnwidth}{!}{
\begin{tabular}{ccccccc}
\toprule
&&& Compression & \multicolumn{2}{c}{\% Removed} & Timed \\
Architecture & $\lOne$ & Accuracy (\%) & Runtime (s) & Neurons & Connections & Out \\
\midrule
2 $\times$ 100 & 0 & 97.92 $\pm$ 0.09 & 3.4 $\pm$ 0.3 & 0 $\pm$ 0 & 0 $\pm$ 0 & 0 \\
2 $\times$ 100 & 0.000025 & 97.93 $\pm$ 0.02 & 3.2 $\pm$ 0.1 & 0 $\pm$ 0 & 0 $\pm$ 0 & 0 \\
2 $\times$ 100 & 0.00005 & 98.06 $\pm$ 0.09 & 3.5 $\pm$ 0.3 & 0.1 $\pm$ 0.2 & 0.2 $\pm$ 0.4 & 0 \\
2 $\times$ 100 & 0.000075 & 98.13 $\pm$ 0.09 & 3.2 $\pm$ 0.2 & 1.1 $\pm$ 0.4 & 2 $\pm$ 0.8 & 0 \\
2 $\times$ 100 & 0.0001 & 98.12 $\pm$ 0.09 & 3.5 $\pm$ 0.1 & 3.4 $\pm$ 0.7 & 6 $\pm$ 1 & 0 \\
2 $\times$ 100 & 0.000125 & 98.01 $\pm$ 0.09 & 3.5 $\pm$ 0.3 & 9.2 $\pm$ 0.6 & 17 $\pm$ 1 & 0 \\
2 $\times$ 100 & 0.00015 & 97.9 $\pm$ 0.1 & 3.4 $\pm$ 0.3 & 12 $\pm$ 2 & 21 $\pm$ 4 & 0 \\
2 $\times$ 100 & 0.000175 & 97.88 $\pm$ 0.05 & 3.4 $\pm$ 0.3 & 15 $\pm$ 3 & 26 $\pm$ 4 & 0 \\
2 $\times$ 100 & 0.0002 & 97.91 $\pm$ 0.1 & 3.5 $\pm$ 0.4 & 18 $\pm$ 2 & 31 $\pm$ 3 & 0 \\
2 $\times$ 100 & 0.000225 & 97.8 $\pm$ 0.1 & 4.2 $\pm$ 0.9 & 18 $\pm$ 3 & 31 $\pm$ 5 & 0 \\
2 $\times$ 100 & 0.00025 & 97.65 $\pm$ 0.09 & 4 $\pm$ 0.5 & 20 $\pm$ 2 & 34 $\pm$ 4 & 0 \\
2 $\times$ 100 & 0.000275 & 97.69 $\pm$ 0.09 & 4 $\pm$ 1 & 22 $\pm$ 2 & 38 $\pm$ 3 & 0 \\
2 $\times$ 100 & 0.0003 & 97.64 $\pm$ 0.06 & 3.8 $\pm$ 0.4 & 24 $\pm$ 2 & 40 $\pm$ 4 & 0 \\
2 $\times$ 100 & 0.000325 & 97.52 $\pm$ 0.08 & 3.5 $\pm$ 0.3 & 24 $\pm$ 3 & 41 $\pm$ 4 & 0 \\
2 $\times$ 100 & 0.00035 & 97.42 $\pm$ 0.04 & 4 $\pm$ 1 & 23 $\pm$ 3 & 39 $\pm$ 4 & 0 \\
2 $\times$ 100 & 0.000375 & 97.3 $\pm$ 0.2 & 3.4 $\pm$ 0.3 & 24 $\pm$ 3 & 40 $\pm$ 5 & 0 \\
2 $\times$ 100 & 0.0004 & 97.28 $\pm$ 0.03 & 4.1 $\pm$ 0.7 & 23 $\pm$ 2 & 38 $\pm$ 3 & 0 \\
\midrule
\midrule
2 $\times$ 200 & 0 & 98.03 $\pm$ 0.05 & 6.9 $\pm$ 0.7 & 0 $\pm$ 0 & 0 $\pm$ 0 & 0 \\
2 $\times$ 200 & 0.000025 & 98.2 $\pm$ 0.05 & 7.1 $\pm$ 0.7 & 0 $\pm$ 0 & 0 $\pm$ 0 & 0 \\
2 $\times$ 200 & 0.00005 & 98.15 $\pm$ 0.04 & 7.2 $\pm$ 0.4 & 0.1 $\pm$ 0.1 & 0.2 $\pm$ 0.3 & 0 \\
2 $\times$ 200 & 0.000075 & 98.18 $\pm$ 0.09 & 12 $\pm$ 9 & 3 $\pm$ 1 & 6 $\pm$ 2 & 0 \\
2 $\times$ 200 & 0.0001 & 98.16 $\pm$ 0.07 & 8.8 $\pm$ 0.7 & 11 $\pm$ 1 & 20 $\pm$ 2 & 0 \\
2 $\times$ 200 & 0.000125 & 98.1 $\pm$ 0.09 & 14 $\pm$ 10 & 15 $\pm$ 2 & 26 $\pm$ 3 & 0 \\
2 $\times$ 200 & 0.00015 & 98 $\pm$ 0.02 & 10 $\pm$ 3 & 18 $\pm$ 2 & 32 $\pm$ 3 & 0 \\
2 $\times$ 200 & 0.000175 & 97.9 $\pm$ 0.1 & 9 $\pm$ 2 & 20 $\pm$ 2 & 35 $\pm$ 3 & 0 \\
2 $\times$ 200 & 0.0002 & 97.95 $\pm$ 0.08 & 8 $\pm$ 2 & 20.8 $\pm$ 0.6 & 36.6 $\pm$ 1 & 0 \\
\midrule
2 $\times$ 400 & 0 & 98.1 $\pm$ 0.1 & 14.8 $\pm$ 0.4 & 0 $\pm$ 0 & 0 $\pm$ 0 & 0 \\
2 $\times$ 400 & 0.000025 & 98.25 $\pm$ 0.09 & 14.5 $\pm$ 0.5 & 0 $\pm$ 0 & 0 $\pm$ 0 & 0 \\
2 $\times$ 400 & 0.00005 & 98.25 $\pm$ 0.07 & 20 $\pm$ 2 & 0 $\pm$ 0 & 0 $\pm$ 0 & 0 \\
2 $\times$ 400 & 0.000075 & 98.23 $\pm$ 0.07 & 180 $\pm$ 80 & 8 $\pm$ 1 & 16 $\pm$ 2 & 0 \\
2 $\times$ 400 & 0.0001 & 98.1 $\pm$ 0.09 & 200 $\pm$ 100 & 14 $\pm$ 1 & 26 $\pm$ 2 & 0 \\
2 $\times$ 400 & 0.000125 & 98.05 $\pm$ 0.08 & 50 $\pm$ 20 & 18 $\pm$ 1 & 32 $\pm$ 2 & 0 \\
2 $\times$ 400 & 0.00015 & 98.03 $\pm$ 0.05 & 29 $\pm$ 10 & 19 $\pm$ 2 & 34 $\pm$ 3 & 0 \\
2 $\times$ 400 & 0.000175 & 97.9 $\pm$ 0.1 & 100 $\pm$ 100 & 17.7 $\pm$ 0.8 & 32 $\pm$ 1 & 0 \\
2 $\times$ 400 & 0.0002 & 97.87 $\pm$ 0.1 & 1000 $\pm$ 1000 & 18 $\pm$ 1 & 33 $\pm$ 2 & 0 \\
\midrule
2 $\times$ 800 & 0 & 98.21 $\pm$ 0.05 & 37.6 $\pm$ 0.3 & 0 $\pm$ 0 & 0 $\pm$ 0 & 0 \\
2 $\times$ 800 & 0.000025 & 98.26 $\pm$ 0.05 & 38.2 $\pm$ 0.4 & 0 $\pm$ 0 & 0 $\pm$ 0 & 0 \\
2 $\times$ 800 & 0.000075 & 98.23 $\pm$ 0.03 & 1300 $\pm$ 800 & 12 $\pm$ 0.7 & 22 $\pm$ 1 & 0 \\
2 $\times$ 800 & 0.0001 & 98 $\pm$ 0.1 & 1000 $\pm$ 1000 & 15.9 $\pm$ 0.9 & 29 $\pm$ 1 & 0 \\
2 $\times$ 800 & 0.000125 & 98.01 $\pm$ 0.07 & 100 $\pm$ 100 & 16.8 $\pm$ 0.8 & 31 $\pm$ 1 & 0 \\
2 $\times$ 800 & 0.00015 & 98.07 $\pm$ 0.06 & 90 $\pm$ 30 & 17.3 $\pm$ 0.6 & 31 $\pm$ 1 & 0 \\
2 $\times$ 800 & 0.000175 & 97.91 $\pm$ 0.07 & 50 $\pm$ 20 & 16.5 $\pm$ 0.9 & 30 $\pm$ 2 & 0 \\
2 $\times$ 800 & 0.0002 & 97.78 $\pm$ 0.06 & 80 $\pm$ 30 & 16.7 $\pm$ 0.6 & 31 $\pm$ 1 & 0 \\
\bottomrule
\end{tabular}
}
\end{sc}
\end{small}
\end{center}
\vskip -0.1in
\end{table}

\textbf{Runtime Comparison with SoTA~~}
\myReferFigure{fig:mnist_runtime_with_reg} shows the comparison of runtimes with the proposed method and the baseline with the strength of $\lOne$ regularization on the MNIST classifiers. We observe that the new method presents a median gain of $\mathbf{81}$ times in speedup.

\begin{figure}[!tb]
    \centering
    \begin{subfigure}[t]{0.45\linewidth}
        \runtimeRegularizationPlot{0.7}{2}{300000}{0, 0.00005, 0.0001, 0.00015, 0.0002}{
            \addLinePlotWithErrorBar {mpl_red }{2pt}{data/mnist_runtime_fig2_sheet.txt}{100-100}{smooth}{runtime}{runtime-std}
            \addlegendentry{$2 \times 100$}
            
            \addLinePlotWithErrorBar {mpl_orange }{2pt}{data/mnist_runtime_fig2_sheet.txt}{200-200}{smooth}{runtime}{runtime-std}
            \addlegendentry{$2 \times 200$}
            
            \addLinePlotWithErrorBar {mpl_green  }{2pt}{data/mnist_runtime_fig2_sheet.txt}{400-400}{smooth}{runtime}{runtime-std}
            \addlegendentry{$2 \times 400$}
            
            \addLinePlotWithErrorBar {mpl_blue}{2pt}{data/mnist_runtime_fig2_sheet.txt}{800-800}{smooth}{runtime}{runtime-std}
            \addlegendentry{$2 \times 800$}

            \addLinePlotWithErrorBar {mpl_red }{2pt}{data/mnist_runtime_fig2_sheet.txt}{100-100}{dotted}{runtime-old}{runtime-old-std}
            \addLinePlotWithErrorBar {mpl_orange}{2pt}{data/mnist_runtime_fig2_sheet.txt}{200-200}{dotted}{runtime-old}{runtime-old-std}
            \addLinePlotWithErrorBar {mpl_green  }{2pt}{data/mnist_runtime_fig2_sheet.txt}{400-400}{dotted}{runtime-old}{runtime-old-std}
            \addLinePlotWithErrorBar {mpl_blue  }{2pt}{data/mnist_runtime_fig2_sheet.txt}{800-800}{dotted}{runtime-old}{runtime-old-std}

        }
        \caption{With width}
    \end{subfigure}
    \hfill
    \begin{subfigure}[t]{0.45\linewidth}
        \runtimeRegularizationPlot{0.7}{2}{300000}{0, 0.00005, 0.0001, 0.00015, 0.0002}{
            \addLinePlotWithErrorBar {mpl_red }{2pt}{data/mnist_runtime_fig2_sheet.txt}{100-100}{smooth}{runtime}{runtime-std}
            \addlegendentry{$2 \times 100$}
            \addLinePlotWithErrorBar {mpl_orange}{2pt}{data/mnist_runtime_fig2_sheet.txt}{100-100-100}{smooth}{runtime}{runtime-std}
            \addlegendentry{$3 \times 100$}
            \addLinePlotWithErrorBar {mpl_green }{2pt}{data/mnist_runtime_fig2_sheet.txt}{100-100-100-100}{smooth}{runtime}{runtime-std}
            \addlegendentry{$4 \times 100$}

            
            \addLinePlotWithErrorBar {mpl_red }{2pt}{data/mnist_runtime_fig2_sheet.txt}{100-100}{dotted}{runtime-old}{runtime-old-std}
            \addLinePlotWithErrorBar {mpl_orange}{2pt}{data/mnist_runtime_fig2_sheet.txt}{100-100-100}{dotted}{runtime-old}{runtime-old-std}
            \addLinePlotWithErrorBar {mpl_green }{2pt}{data/mnist_runtime_fig2_sheet.txt}{100-100-100-100}{dotted}{runtime-old}{runtime-old-std}
            
        }
        \caption{With depth}
    \end{subfigure}
    \caption{
    \textbf{MNIST Classifiers: Comparison of runtimes} for proposed method (solid) and baseline (dashed) with the strength of regularization to identify stable neurons: (a) with increasing width (b) with increasing depth. We report the average and the standard deviation of the runtime of models with five different initialization for each regularization. Note that the y-axis is in the log scale.
    The median speedup is $\mathbf{81}$ times.
    }
    \label{fig:mnist_runtime_with_reg}
\end{figure}

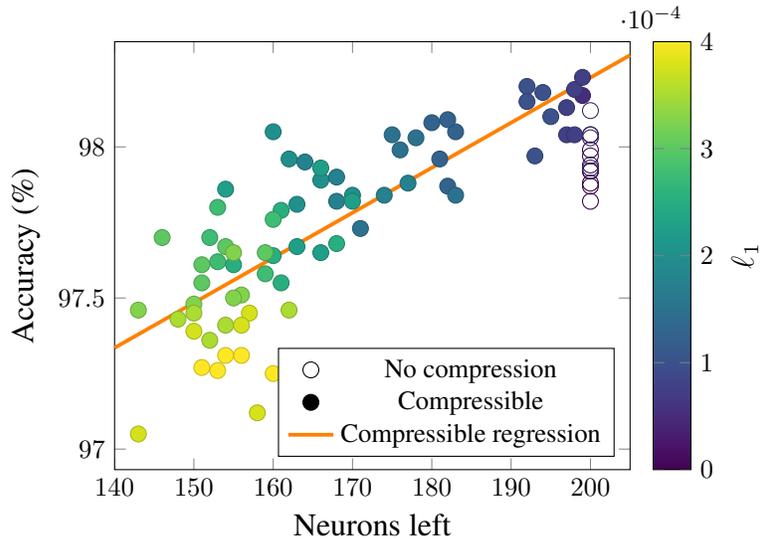
\begin{figure}[!htb]
    \centering
        \begin{tikzpicture}
        \begin{axis}[
            legend pos=south east,
            colorbar,
            point meta=explicit,
            point meta min=0,
            point meta max=0.0004,
            cycle list name=mark list,
            colormap name=viridis,
            xlabel={\large{Neurons left}},
            ylabel={\large{Accuracy (\%)}},
            colorbar style={ylabel={\large$\lOne$}},
            xmin=140,
            xmax=205,
        ]
        \addlegendentry{No compression}
        \addplot+[
            only marks,
            scatter,
            mark=o,
            mark size=3pt]
        table[meta=reg,x=nodes,y=accuracy]
        {data/mnist_uncompressed_regression.txt};
        \addplot+[
            only marks,
            scatter,
            mark=*,
            mark size=3pt]
        table[meta=reg,x=nodes,y=accuracy]
        {data/mnist_compressed_regression.txt};
        \addlegendentry{Compressible}
        \draw[color=orange,line width=0.5mm]
            (axis cs:140,97.335) -- 
            (axis cs:205,98.3035);
        \addlegendimage{line width=0.5mm,color=orange}
        \addlegendentry{Compressible regression}
        \end{axis}
        \end{tikzpicture}
    \caption{\textbf{Relationship between size of compressed neural network and accuracy on $2 \times 100$ MNIST classifiers.} The coefficient of determination (R$^2$) for the linear regression obtained for accuracy based on neurons left for compressible networks is $69$\%.}
    \label{fig:mnist_accuracy}
\end{figure}

\clearpage
\subsection{MNIST Autoencoders}\label{ap:autoencoder}

For the autoencoders, 
we use the notation $n_1~|~n_2~|~n_3$ for the architecture of $3$ hidden layers with $n_1, n_2,$ and $n_3$ neurons.
The output layer has the same size as the input, $784$, and uses ReLU activation. 
Starting with the architecture $100~|~10~|~100$, 
we evaluated changes to the bottleneck width $n_2$ as well as to the width of the other two layers. 
First, we changed the bottleneck width to $n_2 = 25$ and $n_2 = 50$. 
Second, we changed the width of the other layers to $n_1, n_3 = 50$, $n_1, n_3 = 200$, and $n_1, n_3 = 400$ while keeping $n_2 = 10$. 
For each architecture, 
we trained and evaluated neural networks with 5 different random initialization seeds using $\lOne = 0$, $\lOne = 0.00002$, and $\lOne = 0.0002$. 

\textbf{Relationship between Runtime and Regularization ~~}
\myReferTable{tab:ae_summary} reports the runtime to identify stable neurons and the proportion of neurons---as well as the corresponding connections---that can be removed due to stability in each case on MNIST Autoencoders. 

With the largest amount of regularization, 
we notice that the runtimes are considerably smaller and most of the network can be removed while the loss during training only doubles in comparison to using zero or a moderate amount of regularization. 
In fact, 
the only neurons that are not stable in such case are in the first layer, 
whereas between $3$ and $4$ out of the $5$ neural networks trained for each architecture have all hidden layers completely stable. 
By also evaluating the stability of the output layer, 
we identified a few cases in which the output layer is entirely stable. 
While we have not explicitly explored that possibility in the proposed algorithm, 
the implication for such case is that the neural network can be reduced to a linear function on the domain of interest. 
With autoencoders, we observed that this can happen when the regularization during training no more than doubles the loss, 
and that we can evaluate if that happens within seconds: the runtime when the stability of the output layer is tested is $1$ seconds on average and never more than $25$ seconds. 

\textbf{Runtime Comparison with SoTA~~}
\myReferFigure{fig:ae_runtime} shows the difference in runtimes between our approach and the baseline~\cite{serra2020lossless} for higher regularization, fixed $n_2=10$, and varying but equal values for $n_1$ and $n_3$ on the MNIST Autoencoders. We observe that the new method presents a median gain of $\mathbf{159}$ times in running time, which increases with the width of the non-bottleneck layers.

\begin{table*}[!htb]
\caption{\textbf{MNIST Autoencoders:} Compression results with varying architectures and levels of regularization.}
\label{tab:ae_summary}
\vskip 0.15in
\begin{center}
\begin{small}
\begin{sc}
\resizebox{\columnwidth}{!}{
\begin{tabular}{ccccccc}
\toprule
&&& Compression & \multicolumn{2}{c}{\% Removed} & Timed Out \\
Architecture & $\lOne$ & Loss & Runtime (s) & Neurons & Connections & ~ \\
\midrule
$100~|~10~|~100$ &0 &0.045 $\pm$ 0.001 & 130 $\pm$ 30 & 0.1 $\pm$ 0.1 & 0.05 $\pm$ 0.06 & 0 \\
$100~|~10~|~100$ &0.00002 &0.047 $\pm$ 0.0009 & 120 $\pm$ 30 & 12.7 $\pm$ 0.6 & 7.2 $\pm$ 0.9 & 0 \\
$100~|~10~|~100$ &0.0002 &0.077 $\pm$ 0.002 & 2.73 $\pm$ 0.05 & 95 $\pm$ 6 & 90 $\pm$ 10 & 0 \\ 
\midrule
$100~|~25~|~100$ &0 &0.035 $\pm$ 0.001 & 500 $\pm$ 300 & 0 $\pm$ 0 & 0 $\pm$ 0 & 0 \\
$100~|~25~|~100$ &0.00002 &0.047 $\pm$ 0.001 & 800 $\pm$ 200 & 14 $\pm$ 1 & 10 $\pm$ 2 & 0 \\
$100~|~25~|~100$ &0.0002 &0.076 $\pm$ 0.001 & 2.88 $\pm$ 0.08 & 90 $\pm$ 7 & 80 $\pm$ 20 & 0 \\
\midrule
$100~|~50~|~100$ &0 &0.0311 $\pm$ 0.0009 & 230 $\pm$ 20 & 0 $\pm$ 0 & 0 $\pm$ 0 & 0 \\
$100~|~50~|~100$ &0.00002 &0.0478 $\pm$ 0.0009 & 600 $\pm$ 200 & 17.4 $\pm$ 0.9 & 13 $\pm$ 1 & 0 \\
$100~|~50~|~100$ &0.0002 &0.081 $\pm$ 0.003 & 2.96 $\pm$ 0.04 & 90 $\pm$ 7 & 80 $\pm$ 20 & 0 \\
\midrule
\midrule
$50~|~10~|~50$ &0 &0.047 $\pm$ 0.002 & 33 $\pm$ 4 & 0 $\pm$ 0 & 0 $\pm$ 0 & 0 \\
$50~|~10~|~50$ &0.00002 &0.051 $\pm$ 0.002 & 50 $\pm$ 20 & 14 $\pm$ 3 & 13 $\pm$ 2 & 0 \\
$50~|~10~|~50$ &0.0002 &0.081 $\pm$ 0.002 & 1.42 $\pm$ 0.02 & 89 $\pm$ 8 & 88 $\pm$ 8 & 0 \\
\midrule
$100~|~10~|~100$ &0 &0.045 $\pm$ 0.001 & 130 $\pm$ 30 & 0.1 $\pm$ 0.1 & 0.05 $\pm$ 0.06 & 0 \\
$100~|~10~|~100$ &0.00002 &0.047 $\pm$ 0.0009 & 120 $\pm$ 30 & 12.7 $\pm$ 0.6 & 7.2 $\pm$ 0.9 & 0 \\
$100~|~10~|~100$ &0.0002 &0.077 $\pm$ 0.002 & 2.73 $\pm$ 0.05 & 95 $\pm$ 6 & 90 $\pm$ 10 & 0 \\
\midrule
$200~|~10~|~200$ &0 &0.041 $\pm$ 0.002 & 1000 $\pm$ 1000 & 0.4 $\pm$ 0.4 & 0.4 $\pm$ 0.4 & 1 \\
$200~|~10~|~200$ &0.00002 &0.043 $\pm$ 0.002 & 700 $\pm$ 400 & 14 $\pm$ 0.7 & 7 $\pm$ 1 & 0 \\
$200~|~10~|~200$ &0.0002 &0.076 $\pm$ 0.002 & 5.41 $\pm$ 0.03 & 95 $\pm$ 6 & 80 $\pm$ 20 & 0 \\
\midrule
$400~|~10~|~400$ &0 &0.04 & 2704 & 0 & 0 & 4 \\
$400~|~10~|~400$ &0.00002 &0.0395 $\pm$ 0.001 & 1300 $\pm$ 100 & 15 $\pm$ 1 & 6 $\pm$ 0.7 & 0 \\
$400~|~10~|~400$ &0.0002 &0.073 $\pm$ 0.001 & 10.5 $\pm$ 0.2 & 89.1 $\pm$ 7.5 & 13.6 $\pm$ 59.3 & 0 \\
\bottomrule
\end{tabular}
}
\end{sc}
\end{small}
\end{center}
\vskip -0.1in
\end{table*}

\begin{figure}[!tb]
    \centering
    \includegraphics[width=0.7\columnwidth]{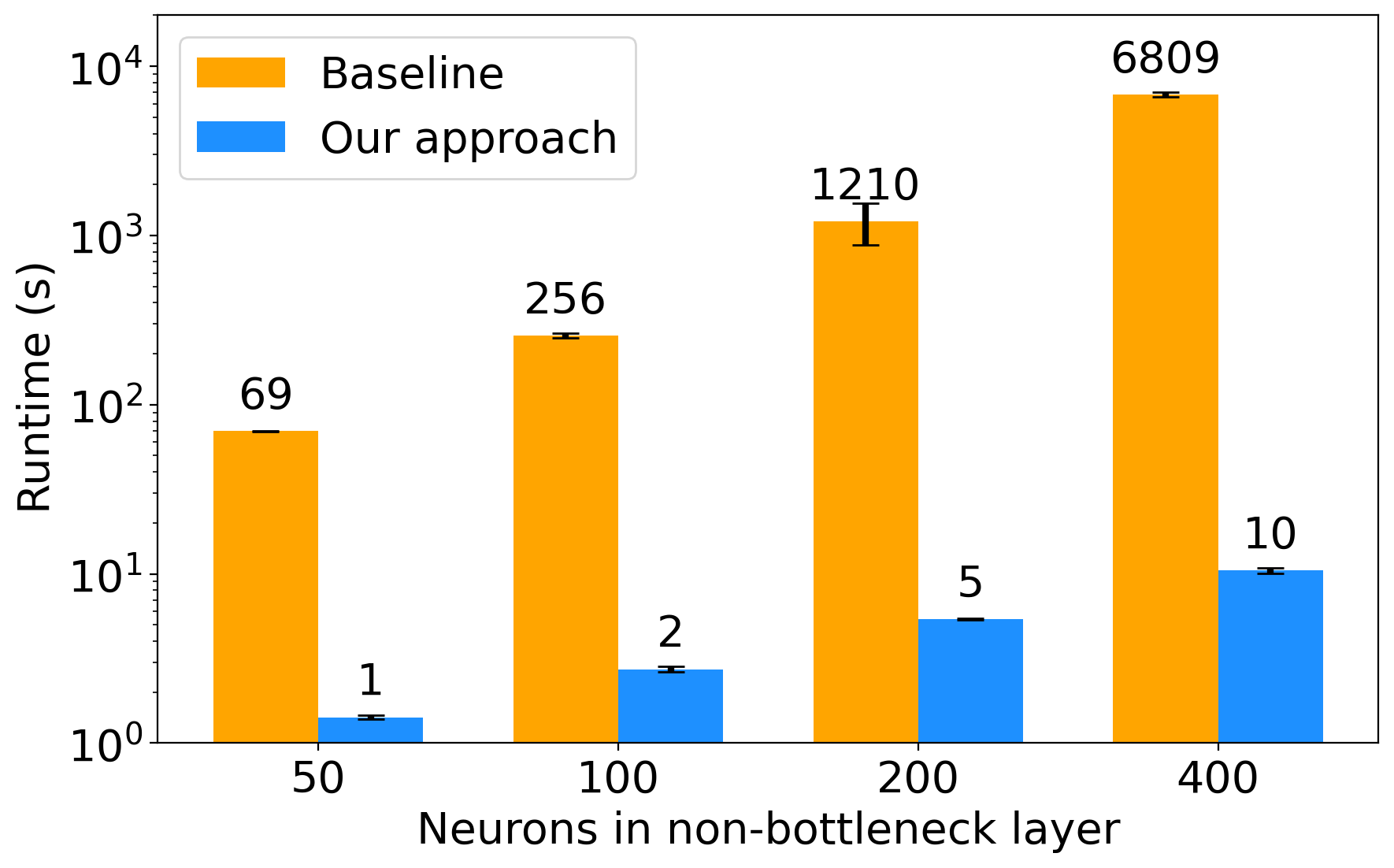}
    \caption{
    \textbf{MNIST Autoencoders: Comparison of runtimes} (in seconds) to identify stable neurons between the proposed approach vs. the baseline from~\cite{serra2020lossless} with high regularization ($\lOne = 0.0002$). Note that the y-axis is in the log scale. 
    The median speedup is $\mathbf{159}$ times.
    }
    \label{fig:ae_runtime}
\end{figure}

\clearpage
\subsection{CIFAR-10 Classifiers}

\textbf{Relationship between Runtime and Regularization ~~}
\myReferTable{tab:cifar10_depth} and \myReferTable{tab:cifar10_width} show the runtime achieved by the proposed method at different $\lOne$ regularization on the CIFAR-10 classifiers.

\begin{table}[!htb]
\caption{\textbf{CIFAR10 Classifiers:} Compression results with fixed width and varying depth.}
\label{tab:cifar10_depth}
\vskip 0.15in
\begin{center}
\begin{small}
\begin{sc}
\begin{tabular}{ccccccc}
\toprule
&&& Compression & \multicolumn{2}{c}{\% Removed} & Timed \\
Arch. & $\lOne$ & Accuracy (\%) & Runtime (s) & Neurons & Connections & Out \\
\midrule
2 $\times$ 100 & 0 & 54.3 $\pm$ 0.2 & 13.4 $\pm$ 0.6 & 0 $\pm$ 0 & 0 $\pm$ 0 & 0 \\
2 $\times$ 100 & 0.000025 & 53.8 $\pm$ 0.9 & 14 $\pm$ 2 & 0 $\pm$ 0 & 0 $\pm$ 0 & 0 \\
2 $\times$ 100 & 0.00005 & 53.6 $\pm$ 0.5 & 13 $\pm$ 3 & 31 $\pm$ 1 & 56 $\pm$ 2 & 0 \\
2 $\times$ 100 & 0.000075 & 52.7 $\pm$ 0.6 & 10.9 $\pm$ 0.8 & 34 $\pm$ 2 & 61 $\pm$ 4 & 0 \\
2 $\times$ 100 & 0.0001 & 52.3 $\pm$ 0.3 & 11 $\pm$ 2 & 36 $\pm$ 2 & 64 $\pm$ 2 & 0 \\
2 $\times$ 100 & 0.000125 & 51.6 $\pm$ 0.5 & 10.4 $\pm$ 0.3 & 39 $\pm$ 3 & 66 $\pm$ 4 & 0 \\
2 $\times$ 100 & 0.00015 & 51 $\pm$ 0.4 & 11 $\pm$ 2 & 40 $\pm$ 2 & 68 $\pm$ 3 & 0 \\
2 $\times$ 100 & 0.000175 & 50.4 $\pm$ 0.4 & 10.3 $\pm$ 0.1 & 42 $\pm$ 3 & 69 $\pm$ 3 & 0 \\
2 $\times$ 100 & 0.0002 & 50.1 $\pm$ 0.6 & 12 $\pm$ 2 & 45 $\pm$ 3 & 71 $\pm$ 3 & 0 \\
2 $\times$ 100 & 0.000225 & 49.6 $\pm$ 0.4 & 11 $\pm$ 1 & 45 $\pm$ 2 & 72 $\pm$ 1 & 0 \\
2 $\times$ 100 & 0.00025 & 48.5 $\pm$ 0.3 & 10.8 $\pm$ 0.7 & 46 $\pm$ 1 & 73 $\pm$ 2 & 0 \\
2 $\times$ 100 & 0.000275 & 48 $\pm$ 0.4 & 10.3 $\pm$ 0.2 & 47 $\pm$ 3 & 75 $\pm$ 3 & 0 \\
2 $\times$ 100 & 0.0003 & 47.8 $\pm$ 0.6 & 10.7 $\pm$ 0.6 & 51 $\pm$ 2 & 78 $\pm$ 2 & 0 \\
2 $\times$ 100 & 0.000325 & 47.2 $\pm$ 0.2 & 10.4 $\pm$ 0.2 & 51 $\pm$ 3 & 77 $\pm$ 2 & 0 \\
2 $\times$ 100 & 0.00035 & 47.2 $\pm$ 0.3 & 10.5 $\pm$ 0.5 & 53 $\pm$ 3 & 79 $\pm$ 3 & 0 \\
2 $\times$ 100 & 0.000375 & 46.8 $\pm$ 0.4 & 10.7 $\pm$ 0.5 & 54 $\pm$ 3 & 80 $\pm$ 2 & 0 \\
2 $\times$ 100 & 0.0004 & 46.3 $\pm$ 0.3 & 10.9 $\pm$ 0.4 & 56 $\pm$ 2 & 81 $\pm$ 2 & 0 \\
\midrule
\midrule
3 $\times$ 100 & 0 & 53.7 $\pm$ 0.7 & 13 $\pm$ 1 & 0 $\pm$ 0 & 0 $\pm$ 0 & 0 \\
3 $\times$ 100 & 0.000025 & 54.5 $\pm$ 0.4 & 20 $\pm$ 10 & 0 $\pm$ 0 & 0 $\pm$ 0 & 0 \\
3 $\times$ 100 & 0.00005 & 53.8 $\pm$ 0.4 & 13 $\pm$ 2 & 22.3 $\pm$ 0.8 & 32 $\pm$ 1 & 0 \\
3 $\times$ 100 & 0.000075 & 53.3 $\pm$ 0.6 & 11.6 $\pm$ 0.9 & 23 $\pm$ 2 & 34 $\pm$ 4 & 0 \\
3 $\times$ 100 & 0.0001 & 53.2 $\pm$ 0.6 & 20 $\pm$ 10 & 25 $\pm$ 2 & 36 $\pm$ 3 & 0 \\
3 $\times$ 100 & 0.000125 & 52.5 $\pm$ 0.6 & 14 $\pm$ 5 & 26 $\pm$ 2 & 38 $\pm$ 3 & 0 \\
3 $\times$ 100 & 0.00015 & 51.98 $\pm$ 0.05 & 16 $\pm$ 6 & 29 $\pm$ 1 & 43 $\pm$ 1 & 0 \\
3 $\times$ 100 & 0.000175 & 50.8 $\pm$ 0.6 & 12 $\pm$ 1 & 32 $\pm$ 2 & 47 $\pm$ 2 & 0 \\
3 $\times$ 100 & 0.0002 & 50.3 $\pm$ 0.4 & 15 $\pm$ 7 & 35 $\pm$ 2 & 52 $\pm$ 3 & 0 \\
\midrule
4 $\times$ 100 & 0 & 53.6 $\pm$ 0.6 & 20 $\pm$ 10 & 0 $\pm$ 0 & 0 $\pm$ 0 & 0 \\
4 $\times$ 100 & 0.000025 & 53.9 $\pm$ 0.6 & 20 $\pm$ 20 & 0 $\pm$ 0 & 0 $\pm$ 0 & 0 \\
4 $\times$ 100 & 0.00005 & 53.9 $\pm$ 0.2 & 17 $\pm$ 8 & 15.9 $\pm$ 0.6 & 20.5 $\pm$ 0.8 & 0 \\
4 $\times$ 100 & 0.000075 & 53.7 $\pm$ 0.3 & 13 $\pm$ 1 & 17 $\pm$ 1 & 22 $\pm$ 1 & 0 \\
4 $\times$ 100 & 0.0001 & 52.7 $\pm$ 0.3 & 60 $\pm$ 90 & 19.3 $\pm$ 1 & 25 $\pm$ 1 & 0 \\
4 $\times$ 100 & 0.000125 & 52.4 $\pm$ 0.6 & 15 $\pm$ 5 & 21 $\pm$ 2 & 29 $\pm$ 2 & 0 \\
4 $\times$ 100 & 0.00015 & 51.6 $\pm$ 0.2 & 600 $\pm$ 800 & 25 $\pm$ 1 & 34 $\pm$ 2 & 0 \\
4 $\times$ 100 & 0.000175 & 50.7 $\pm$ 0.3 & 700 $\pm$ 800 & 28.5 $\pm$ 0.9 & 40 $\pm$ 1 & 1 \\
4 $\times$ 100 & 0.0002 & 50.3 $\pm$ 0.4 & 400 $\pm$ 400 & 33.7 $\pm$ 0.9 & 49 $\pm$ 1 & 0 \\
\midrule
5 $\times$ 100 & 0 & 53 $\pm$ 0.5 & 14.4 $\pm$ 0.4 & 0 $\pm$ 0 & 0 $\pm$ 0 & 0 \\
5 $\times$ 100 & 0.000025 & 53.3 $\pm$ 0.8 & 18 $\pm$ 5 & 0 $\pm$ 0 & 0 $\pm$ 0 & 0 \\
5 $\times$ 100 & 0.00005 & 54 $\pm$ 0.1 & 30 $\pm$ 20 & 12.9 $\pm$ 0.6 & 15.7 $\pm$ 0.7 & 0 \\
5 $\times$ 100 & 0.000075 & 53.5 $\pm$ 0.4 & 100 $\pm$ 200 & 14 $\pm$ 0.5 & 17.1 $\pm$ 0.6 & 0 \\
5 $\times$ 100 & 0.0001 & 53.3 $\pm$ 0.3 & 11.8 $\pm$ 0.4 & 16 $\pm$ 1 & 20 $\pm$ 1 & 2 \\
5 $\times$ 100 & 0.000125 & 51.9 $\pm$ 0.4 & 3000 $\pm$ 4000 & 14 $\pm$ 8 & 20 $\pm$ 10 & 2 \\
5 $\times$ 100 & 0.00015 & 51.4 & 1000 & 20 & 27 & 4 \\
5 $\times$ 100 & 0.000175 & 51.3 $\pm$ 0.4 & 2000 $\pm$ 2000 & 27.4 $\pm$ 0.8 & 39 $\pm$ 1 & 3 \\
5 $\times$ 100 & 0.0002 & 50.2 $\pm$ 0.1 & 3000 $\pm$ 2000 & 31 $\pm$ 2 & 45 $\pm$ 3 & 1 \\
\bottomrule
\end{tabular}
\end{sc}
\end{small}
\end{center}
\vskip -0.1in
\end{table}

\begin{table}[!htb]
\caption{\textbf{CIFAR10 Classifiers:} Compression results with fixed height and varying width.}
\label{tab:cifar10_width}
\vskip 0.15in
\begin{center}
\begin{small}
\begin{sc}
\resizebox{\columnwidth}{!}{
\begin{tabular}{ccccccc}
\toprule
&&& Compression & \multicolumn{2}{c}{\% Removed} & Timed \\
Architecture & $\lOne$ & Accuracy (\%) & Runtime (s) & Neurons & Connections & Out \\
\midrule
2 $\times$ 100 & 0 & 54.3 $\pm$ 0.2 & 13.4 $\pm$ 0.6 & 0 $\pm$ 0 & 0 $\pm$ 0 & 0 \\
2 $\times$ 100 & 0.000025 & 53.8 $\pm$ 0.9 & 14 $\pm$ 2 & 0 $\pm$ 0 & 0 $\pm$ 0 & 0 \\
2 $\times$ 100 & 0.00005 & 53.6 $\pm$ 0.5 & 13 $\pm$ 3 & 31 $\pm$ 1 & 56 $\pm$ 2 & 0 \\
2 $\times$ 100 & 0.000075 & 52.7 $\pm$ 0.6 & 10.9 $\pm$ 0.8 & 34 $\pm$ 2 & 61 $\pm$ 4 & 0 \\
2 $\times$ 100 & 0.0001 & 52.3 $\pm$ 0.3 & 11 $\pm$ 2 & 36 $\pm$ 2 & 64 $\pm$ 2 & 0 \\
2 $\times$ 100 & 0.000125 & 51.6 $\pm$ 0.5 & 10.4 $\pm$ 0.3 & 39 $\pm$ 3 & 66 $\pm$ 4 & 0 \\
2 $\times$ 100 & 0.00015 & 51 $\pm$ 0.4 & 11 $\pm$ 2 & 40 $\pm$ 2 & 68 $\pm$ 3 & 0 \\
2 $\times$ 100 & 0.000175 & 50.4 $\pm$ 0.4 & 10.3 $\pm$ 0.1 & 42 $\pm$ 3 & 69 $\pm$ 3 & 0 \\
2 $\times$ 100 & 0.0002 & 50.1 $\pm$ 0.6 & 12 $\pm$ 2 & 45 $\pm$ 3 & 71 $\pm$ 3 & 0 \\
2 $\times$ 100 & 0.000225 & 49.6 $\pm$ 0.4 & 11 $\pm$ 1 & 45 $\pm$ 2 & 72 $\pm$ 1 & 0 \\
2 $\times$ 100 & 0.00025 & 48.5 $\pm$ 0.3 & 10.8 $\pm$ 0.7 & 46 $\pm$ 1 & 73 $\pm$ 2 & 0 \\
2 $\times$ 100 & 0.000275 & 48 $\pm$ 0.4 & 10.3 $\pm$ 0.2 & 47 $\pm$ 3 & 75 $\pm$ 3 & 0 \\
2 $\times$ 100 & 0.0003 & 47.8 $\pm$ 0.6 & 10.7 $\pm$ 0.6 & 51 $\pm$ 2 & 78 $\pm$ 2 & 0 \\
2 $\times$ 100 & 0.000325 & 47.2 $\pm$ 0.2 & 10.4 $\pm$ 0.2 & 51 $\pm$ 3 & 77 $\pm$ 2 & 0 \\
2 $\times$ 100 & 0.00035 & 47.2 $\pm$ 0.3 & 10.5 $\pm$ 0.5 & 53 $\pm$ 3 & 79 $\pm$ 3 & 0 \\
2 $\times$ 100 & 0.000375 & 46.8 $\pm$ 0.4 & 10.7 $\pm$ 0.5 & 54 $\pm$ 3 & 80 $\pm$ 2 & 0 \\
2 $\times$ 100 & 0.0004 & 46.3 $\pm$ 0.3 & 10.9 $\pm$ 0.4 & 56 $\pm$ 2 & 81 $\pm$ 2 & 0 \\
\midrule
\midrule
2 $\times$ 200 & 0 & 56.8 $\pm$ 0.2 & 23 $\pm$ 2 & 0 $\pm$ 0 & 0 $\pm$ 0 & 0 \\
2 $\times$ 200 & 0.000025 & 56.8 $\pm$ 0.6 & 28 $\pm$ 1 & 0 $\pm$ 0 & 0 $\pm$ 0 & 0 \\
2 $\times$ 200 & 0.00005 & 56.3 $\pm$ 0.4 & 30 $\pm$ 10 & 28 $\pm$ 2 & 54 $\pm$ 3 & 0 \\
2 $\times$ 200 & 0.000075 & 55.5 $\pm$ 0.3 & 40 $\pm$ 20 & 32 $\pm$ 2 & 61 $\pm$ 3 & 0 \\
2 $\times$ 200 & 0.0001 & 54.3 $\pm$ 0.4 & 24 $\pm$ 6 & 37 $\pm$ 2 & 68 $\pm$ 3 & 0 \\
2 $\times$ 200 & 0.000125 & 53.3 $\pm$ 0.3 & 1000 $\pm$ 2000 & 42 $\pm$ 1 & 72 $\pm$ 2 & 0 \\
2 $\times$ 200 & 0.00015 & 51.9 $\pm$ 0.7 & 24 $\pm$ 4 & 45 $\pm$ 2 & 75 $\pm$ 2 & 0 \\
2 $\times$ 200 & 0.000175 & 51.2 $\pm$ 0.4 & 21.6 $\pm$ 0.8 & 49 $\pm$ 2 & 78 $\pm$ 2 & 0 \\
2 $\times$ 200 & 0.0002 & 50.4 $\pm$ 0.1 & 23 $\pm$ 3 & 52 $\pm$ 2 & 80 $\pm$ 1 & 0 \\
\midrule
2 $\times$ 400 & 0 & 58.7 $\pm$ 0.1 & 48 $\pm$ 2 & 0 $\pm$ 0 & 0 $\pm$ 0 & 0 \\
2 $\times$ 400 & 0.000025 & 59.2 $\pm$ 0.4 & 55 $\pm$ 9 & 0 $\pm$ 0 & 0 $\pm$ 0 & 0 \\
2 $\times$ 400 & 0.00005 & 58.2 $\pm$ 0.1 & 60 $\pm$ 30 & 28 $\pm$ 1 & 54 $\pm$ 2 & 0 \\
2 $\times$ 400 & 0.000075 & 56.1 $\pm$ 0.2 & 51 $\pm$ 3 & 37 $\pm$ 1 & 68 $\pm$ 2 & 0 \\
2 $\times$ 400 & 0.0001 & 55 $\pm$ 0.3 & 48 $\pm$ 4 & 45 $\pm$ 2 & 75 $\pm$ 2 & 0 \\
2 $\times$ 400 & 0.000125 & 53.5 $\pm$ 0.2 & 45 $\pm$ 3 & 48.3 $\pm$ 0.8 & 77.5 $\pm$ 0.6 & 0 \\
2 $\times$ 400 & 0.00015 & 51.9 $\pm$ 0.3 & 50 $\pm$ 10 & 52 $\pm$ 1 & 80 $\pm$ 2 & 0 \\
2 $\times$ 400 & 0.000175 & 50.9 $\pm$ 0.5 & 43 $\pm$ 3 & 56 $\pm$ 2 & 83 $\pm$ 1 & 0 \\
2 $\times$ 400 & 0.0002 & 50.3 $\pm$ 0.3 & 45 $\pm$ 3 & 58 $\pm$ 3 & 83 $\pm$ 2 & 0 \\
\midrule
2 $\times$ 800 & 0 & 60.3 $\pm$ 0.2 & 125 $\pm$ 7 & 0 $\pm$ 0 & 0 $\pm$ 0 & 0 \\
2 $\times$ 800 & 0.000025 & 60.3 $\pm$ 0.2 & 190 $\pm$ 80 & 0 $\pm$ 0 & 0 $\pm$ 0 & 0 \\
2 $\times$ 800 & 0.00005 & 58.3 $\pm$ 0.2 & 240 $\pm$ 90 & 23 $\pm$ 6 & 50 $\pm$ 10 & 0 \\
2 $\times$ 800 & 0.000075 & 56.3 $\pm$ 0.3 & 150 $\pm$ 50 & 37 $\pm$ 9 & 60 $\pm$ 10 & 0 \\
2 $\times$ 800 & 0.0001 & 54.6 $\pm$ 0.2 & 108 $\pm$ 9 & 40 $\pm$ 10 & 70 $\pm$ 10 & 0 \\
2 $\times$ 800 & 0.000125 & 53.2 $\pm$ 0.5 & 130 $\pm$ 30 & 50 $\pm$ 2 & 76 $\pm$ 2 & 0 \\
2 $\times$ 800 & 0.00015 & 51.8 $\pm$ 0.3 & 110 $\pm$ 10 & 52.5 $\pm$ 0.8 & 78.2 $\pm$ 0.7 & 0 \\
2 $\times$ 800 & 0.000175 & 50.6 $\pm$ 0.4 & 99 $\pm$ 6 & 53 $\pm$ 1 & 78.7 $\pm$ 1 & 0 \\
2 $\times$ 800 & 0.0002 & 50.3 $\pm$ 0.2 & 98 $\pm$ 6 & 54 $\pm$ 1 & 79 $\pm$ 1 & 0 \\
\bottomrule
\end{tabular}
}
\end{sc}
\end{small}
\end{center}
\vskip -0.1in
\end{table}

\textbf{Runtime Comparison with SoTA~~}
\myReferFigure{fig:cifar_runtime_with_reg} shows the comparison of runtime of the proposed method and the baseline with the strength of $\lOne$ regularization on the CIFAR-10 classifiers. We observe that the new method presents a median gain of $\mathbf{183}$ times in running time.

\begin{figure}[!t]
    \centering
    \begin{subfigure}[t]{0.45\linewidth}
        \runtimeRegularizationPlot{0.7}{2}{300000}{0, 0.00005, 0.0001, 0.00015, 0.0002}{
            \addLinePlotWithErrorBar {mpl_red }{2pt}{data/cifar/cifar_figure1_sheet.txt}{100-100}{smooth}{runtime}{runtime-std}
            \addlegendentry{$2 \times 100$}
            
            \addLinePlotWithErrorBar {mpl_orange }{2pt}{data/cifar/cifar_figure1_sheet.txt}{200-200}{smooth}{runtime}{runtime-std}
            \addlegendentry{$2 \times 200$}
            
            \addLinePlotWithErrorBar {mpl_green  }{2pt}{data/cifar/cifar_figure1_sheet.txt}{400-400}{smooth}{runtime}{runtime-std}
            \addlegendentry{$2 \times 400$}
            
            \addLinePlotWithErrorBar {mpl_blue}{2pt}{data/cifar/cifar_figure1_sheet.txt}{800-800}{smooth}{runtime}{runtime-std}
            \addlegendentry{$2 \times 800$}

            \addLinePlotWithErrorBar {mpl_red }{2pt}{data/cifar/cifar_figure1_sheet.txt}{100-100}{dotted}{runtime-old}{runtime-old-std}
            \addLinePlotWithErrorBar {mpl_orange }{2pt}{data/cifar/cifar_figure1_sheet.txt}{200-200}{dotted}{runtime-old}{runtime-old-std}
            \addLinePlotWithErrorBar {mpl_green  }{2pt}{data/cifar/cifar_figure1_sheet.txt}{400-400}{dotted}{runtime-old}{runtime-old-std}
            \addLinePlotWithErrorBar {mpl_blue}{1pt}{data/cifar/cifar_figure1_sheet.txt}{800-800}{dotted}{runtime-old}{runtime-old-std}

        }
        \caption{With width}
    \end{subfigure}
    \hfill
    \begin{subfigure}[t]{0.45\linewidth}
        \runtimeRegularizationPlot{0.7}{2}{300000}{0, 0.00005, 0.0001, 0.00015, 0.0002}{
            \addLinePlotWithErrorBar {mpl_red }{2pt}{data/cifar/cifar_figure1_sheet.txt}{100-100}{smooth}{runtime}{runtime-std}
            \addlegendentry{$2 \times 100$}
            \addLinePlotWithErrorBar {mpl_orange}{2pt}{data/cifar/cifar_figure1_sheet.txt}{100-100-100}{smooth}{runtime}{runtime-std}
            \addlegendentry{$3 \times 100$}
    
            \addLinePlotWithErrorBar {mpl_red }{2pt}{data/cifar/cifar_figure1_sheet.txt}{100-100}{dotted}{runtime-old}{runtime-old-std}
            \addLinePlotWithErrorBar {mpl_orange}{2pt}{data/cifar/cifar_figure1_sheet.txt}{100-100-100}{dotted}{runtime-old}{runtime-old-std}
            
        }
        \caption{With depth}
    \end{subfigure}
    \caption{
    \textbf{CIFAR-10 Classifiers: Comparison of runtimes} for proposed method (solid) and baseline (dashed) with the strength of regularization to identify stable neurons: (a) with increasing width (b) with increasing depth. We report the average and the standard deviation of the runtime of models with five different initialization for each regularization. Note that the y-axis is in the log scale.
    The median speedup is $\mathbf{183}$ times. 
    }
    \label{fig:cifar_runtime_with_reg}
\end{figure}

\begin{figure}[!htb]
    \centering
        \begin{tikzpicture}
        \begin{axis}[
            legend pos=south east,
            colorbar,
            point meta=explicit,
            point meta min=0,
            point meta max=0.0004,
            cycle list name=mark list,
            colormap name=viridis,
            xlabel={\large{Neurons left}},
            ylabel={\large{Accuracy (\%)}},
            colorbar style={ylabel={\large$\lOne$}},
            xmin=80,
            xmax=205,
        ]
        \addlegendentry{No compression}
        \addplot+[
            only marks,
            scatter,
            mark=o,
            mark size=3pt]
        table[meta=reg,x=nodes,y=accuracy]
        {data/cifar10_uncompressed_regression.txt};
        \addplot+[
            only marks,
            scatter,
            mark=*,
            mark size=3pt]
        table[meta=reg,x=nodes,y=accuracy]
        {data/cifar10_compressed_regression.txt};
        \addlegendentry{Compressible}
        \draw[color=orange,line width=0.5mm]
            (axis cs:80,45.12) -- 
            (axis cs:205,62.97);
        \addlegendimage{line width=0.5mm,color=orange}
        \addlegendentry{Compressible regression}
        \end{axis}
        \end{tikzpicture}
    \caption{\textbf{Relationship between size of compressed neural network and accuracy on $2 \times 100$ CIFAR-10 classifiers.} The coefficient of determination (R$^2$) for the linear regression obtained for accuracy based on neurons left for compressible networks is $91$\%.}
    \label{fig:mnist_accuracy}
\end{figure}

\clearpage
\subsection{CIFAR-100 Classifiers}

\textbf{Relationship between Runtime and Regularization ~~}
\myReferTable{tab:cifar100_depth} and \myReferTable{tab:cifar100_width} show the runtime achieved by the proposed method at different $\lOne$ regularization on the CIFAR-100 classifiers.

\begin{table}[!htb]
\caption{\textbf{CIFAR100 Classifiers:} Compression results with fixed width and varying depth.}
\label{tab:cifar100_depth}
\vskip 0.15in
\begin{center}
\begin{small}
\begin{sc}
\begin{tabular}{ccccccc}
\toprule
&&& Compression & \multicolumn{2}{c}{\% Removed} & Timed \\
Arch. & $\lOne$ & Accuracy (\%) & Runtime (s) & Neurons & Connections & Out \\
\midrule
2 $\times$ 100 & 0 & 25.2 $\pm$ 0.2 & 13 $\pm$ 1 & 0 $\pm$ 0 & 0 $\pm$ 0 & 0 \\
2 $\times$ 100 & 0.000025 & 24.8 $\pm$ 0.4 & 13 $\pm$ 2 & 0 $\pm$ 0 & 0 $\pm$ 0 & 1 \\
2 $\times$ 100 & 0.00005 & 24 $\pm$ 0.7 & 11.4 $\pm$ 0.4 & 36 $\pm$ 4 & 36 $\pm$ 4 & 1 \\
2 $\times$ 100 & 0.000075 & 23.7 & 10.4 & 42 & 42 & 4 \\
2 $\times$ 100 & 0.0001 & 23.4 $\pm$ 0.6 & 10.3 $\pm$ 0.1 & 42 $\pm$ 1 & 43 $\pm$ 1 & 1 \\
2 $\times$ 100 & 0.000125 & 22 $\pm$ 2 & 10.453 $\pm$ 0.004 & 48 $\pm$ 1 & 48 $\pm$ 2 & 3 \\
2 $\times$ 100 & 0.00015 & 22.4 $\pm$ 0.6 & 10.8 $\pm$ 1 & 48 $\pm$ 3 & 48 $\pm$ 3 & 1 \\
2 $\times$ 100 & 0.000175 & 21.5 $\pm$ 0.5 & 10.8 $\pm$ 0.3 & 47.9 $\pm$ 0.2 & 48.7 $\pm$ 0.4 & 1 \\
2 $\times$ 100 & 0.0002 & 21 $\pm$ 1 & 10.6 $\pm$ 0.3 & 51 $\pm$ 2 & 52 $\pm$ 2 & 0 \\
2 $\times$ 100 & 0.000225 & 21.2 $\pm$ 0.4 & 11 $\pm$ 0.7 & 51 $\pm$ 2 & 52 $\pm$ 2 & 0 \\
2 $\times$ 100 & 0.00025 & 21 $\pm$ 2 & 10.6 $\pm$ 0.5 & 50 $\pm$ 3 & 52 $\pm$ 4 & 0 \\
2 $\times$ 100 & 0.000275 & 20.7 $\pm$ 0.8 & 10.4 $\pm$ 0.1 & 52 $\pm$ 2 & 54 $\pm$ 2 & 0 \\
2 $\times$ 100 & 0.0003 & 19 $\pm$ 1 & 10.6 $\pm$ 0.2 & 53 $\pm$ 2 & 55 $\pm$ 2 & 0 \\
2 $\times$ 100 & 0.000325 & 19 $\pm$ 1 & 10.7 $\pm$ 0.7 & 53 $\pm$ 4 & 55 $\pm$ 4 & 0 \\
2 $\times$ 100 & 0.00035 & 19.2 $\pm$ 0.9 & 11 $\pm$ 1 & 53 $\pm$ 2 & 55 $\pm$ 1 & 0 \\
2 $\times$ 100 & 0.000375 & 19.4 $\pm$ 0.5 & 10.5 $\pm$ 0.4 & 54 $\pm$ 2 & 56 $\pm$ 2 & 0 \\
2 $\times$ 100 & 0.0004 & 19 $\pm$ 0.5 & 10.5 $\pm$ 0.3 & 53 $\pm$ 3 & 56 $\pm$ 3 & 0 \\
\midrule
\midrule
3 $\times$ 100 & 0 & 24.9 $\pm$ 0.4 & 16 $\pm$ 3 & 0 $\pm$ 0 & 0 $\pm$ 0 & 0 \\
3 $\times$ 100 & 0.000025 & 25.1 $\pm$ 0.4 & 17 $\pm$ 2 & 0 $\pm$ 0 & 0 $\pm$ 0 & 2 \\
3 $\times$ 100 & 0.00005 & 25.4 $\pm$ 0.6 & 20 $\pm$ 10 & 22 $\pm$ 2 & 22 $\pm$ 2 & 2 \\
3 $\times$ 100 & 0.000075 & 24 $\pm$ 1 & 13 $\pm$ 3 & 28 $\pm$ 2 & 28 $\pm$ 2 & 1 \\
3 $\times$ 100 & 0.0001 & 24 $\pm$ 1 & 11.3 $\pm$ 0.4 & 30 $\pm$ 0.9 & 30.4 $\pm$ 1 & 1 \\
3 $\times$ 100 & 0.000125 & 24 $\pm$ 1 & 12 $\pm$ 1 & 31 $\pm$ 1 & 32.4 $\pm$ 0.9 & 1 \\
3 $\times$ 100 & 0.00015 & 23.1 $\pm$ 0.5 & 50 $\pm$ 80 & 34 $\pm$ 1 & 37 $\pm$ 1 & 0 \\
3 $\times$ 100 & 0.000175 & 22 $\pm$ 1 & 10.7 $\pm$ 0.4 & 36 $\pm$ 2 & 38 $\pm$ 3 & 0 \\
3 $\times$ 100 & 0.0002 & 22.4 $\pm$ 0.6 & 12 $\pm$ 1 & 39 $\pm$ 2 & 44 $\pm$ 3 & 0 \\
\midrule
4 $\times$ 100 & 0 & 24.7 $\pm$ 0.5 & 30 $\pm$ 20 & 0 $\pm$ 0 & 0 $\pm$ 0 & 0 \\
4 $\times$ 100 & 0.000025 & 25 $\pm$ 0.7 & 16 $\pm$ 4 & 0 $\pm$ 0 & 0 $\pm$ 0 & 1 \\
4 $\times$ 100 & 0.00005 & 24.8 $\pm$ 0.8 & 2000 $\pm$ 3000 & 18 $\pm$ 1 & 18 $\pm$ 1 & 1 \\
4 $\times$ 100 & 0.000075 & 25.1 $\pm$ 0.5 & 12 $\pm$ 1 & 20 $\pm$ 1 & 20 $\pm$ 1 & 1 \\
4 $\times$ 100 & 0.0001 & 24.8 $\pm$ 0.2 & 12 $\pm$ 2 & 22 $\pm$ 2 & 22 $\pm$ 2 & 2 \\
4 $\times$ 100 & 0.000125 & 23.9 $\pm$ 0.4 & 11.8 $\pm$ 0.5 & 23.9 $\pm$ 0.4 & 25 $\pm$ 0.7 & 2 \\
4 $\times$ 100 & 0.00015 & 23 $\pm$ 1 & 50 $\pm$ 70 & 28 $\pm$ 2 & 31 $\pm$ 3 & 1 \\
4 $\times$ 100 & 0.000175 & 22 $\pm$ 2 & 50 $\pm$ 60 & 31 $\pm$ 3 & 36 $\pm$ 4 & 0 \\
4 $\times$ 100 & 0.0002 & 22 $\pm$ 1 & 100 $\pm$ 200 & 34 $\pm$ 2 & 41 $\pm$ 2 & 0 \\
\midrule
5 $\times$ 100 & 0 & 24.2 $\pm$ 0.5 & 18 $\pm$ 4 & 0 $\pm$ 0 & 0 $\pm$ 0 & 0 \\
5 $\times$ 100 & 0.000025 & 24.6 $\pm$ 0.4 & 100 $\pm$ 200 & 0 $\pm$ 0 & 0 $\pm$ 0 & 0 \\
5 $\times$ 100 & 0.00005 & 25.4 $\pm$ 0.1 & 40 $\pm$ 30 & 12.9 $\pm$ 0.7 & 12.9 $\pm$ 0.7 & 3 \\
5 $\times$ 100 & 0.000075 & 24.6 $\pm$ 0.2 & 14.1 $\pm$ 0.4 & 16.4 $\pm$ 0.3 & 16.6 $\pm$ 0.3 & 2 \\
5 $\times$ 100 & 0.0001 & 24 $\pm$ 1 & 1000 $\pm$ 2000 & 18 $\pm$ 1 & 19 $\pm$ 2 & 1 \\
5 $\times$ 100 & 0.000125 & 24.3 $\pm$ 0.2 & 200 $\pm$ 300 & 19 $\pm$ 1 & 20 $\pm$ 1 & 2 \\
5 $\times$ 100 & 0.00015 & 23.6 $\pm$ 0.5 & 30 $\pm$ 20 & 22.2 $\pm$ 1 & 26 $\pm$ 2 & 0 \\
5 $\times$ 100 & 0.000175 & 22 $\pm$ 1 & 1000 $\pm$ 1000 & 26.5 $\pm$ 0.5 & 32.4 $\pm$ 0.7 & 0 \\
5 $\times$ 100 & 0.0002 & 22 $\pm$ 1 & 1000 $\pm$ 2000 & 31 $\pm$ 1 & 39 $\pm$ 1 & 1 \\
\bottomrule
\end{tabular}
\end{sc}
\end{small}
\end{center}
\vskip -0.1in
\end{table}

\begin{table}[!htb]
\caption{\textbf{CIFAR100 Classifiers:} Compression results with fixed height and varying width.}
\label{tab:cifar100_width}
\vskip 0.15in
\begin{center}
\begin{small}
\begin{sc}
\resizebox{\columnwidth}{!}{
\begin{tabular}{ccccccc}
\toprule
&&& Compression & \multicolumn{2}{c}{\% Removed} & Timed \\
Architecture & $\lOne$ & Accuracy (\%) & Runtime (s) & Neurons & Connections & Out \\
\midrule
2 $\times$ 100 & 0 & 25.2 $\pm$ 0.2 & 13 $\pm$ 1 & 0 $\pm$ 0 & 0 $\pm$ 0 & 0 \\
2 $\times$ 100 & 0.000025 & 24.8 $\pm$ 0.4 & 13 $\pm$ 2 & 0 $\pm$ 0 & 0 $\pm$ 0 & 1 \\
2 $\times$ 100 & 0.00005 & 24 $\pm$ 0.7 & 11.4 $\pm$ 0.4 & 36 $\pm$ 4 & 36 $\pm$ 4 & 1 \\
2 $\times$ 100 & 0.000075 & 23.7 & 10.4 & 42 & 42 & 4 \\
2 $\times$ 100 & 0.0001 & 23.4 $\pm$ 0.6 & 10.3 $\pm$ 0.1 & 42 $\pm$ 1 & 43 $\pm$ 1 & 1 \\
2 $\times$ 100 & 0.000125 & 22 $\pm$ 2 & 10.453 $\pm$ 0.004 & 48 $\pm$ 1 & 48 $\pm$ 2 & 3 \\
2 $\times$ 100 & 0.00015 & 22.4 $\pm$ 0.6 & 10.8 $\pm$ 1 & 48 $\pm$ 3 & 48 $\pm$ 3 & 1 \\
2 $\times$ 100 & 0.000175 & 21.5 $\pm$ 0.5 & 10.8 $\pm$ 0.3 & 47.9 $\pm$ 0.2 & 48.7 $\pm$ 0.4 & 1 \\
2 $\times$ 100 & 0.0002 & 21 $\pm$ 1 & 10.6 $\pm$ 0.3 & 51 $\pm$ 2 & 52 $\pm$ 2 & 0 \\
2 $\times$ 100 & 0.000225 & 21.2 $\pm$ 0.4 & 11 $\pm$ 0.7 & 51 $\pm$ 2 & 52 $\pm$ 2 & 0 \\
2 $\times$ 100 & 0.00025 & 21 $\pm$ 2 & 10.6 $\pm$ 0.5 & 50 $\pm$ 3 & 52 $\pm$ 4 & 0 \\
2 $\times$ 100 & 0.000275 & 20.7 $\pm$ 0.8 & 10.4 $\pm$ 0.1 & 52 $\pm$ 2 & 54 $\pm$ 2 & 0 \\
2 $\times$ 100 & 0.0003 & 19 $\pm$ 1 & 10.6 $\pm$ 0.2 & 53 $\pm$ 2 & 55 $\pm$ 2 & 0 \\
2 $\times$ 100 & 0.000325 & 19 $\pm$ 1 & 10.7 $\pm$ 0.7 & 53 $\pm$ 4 & 55 $\pm$ 4 & 0 \\
2 $\times$ 100 & 0.00035 & 19.2 $\pm$ 0.9 & 11 $\pm$ 1 & 53 $\pm$ 2 & 55 $\pm$ 1 & 0 \\
2 $\times$ 100 & 0.000375 & 19.4 $\pm$ 0.5 & 10.5 $\pm$ 0.4 & 54 $\pm$ 2 & 56 $\pm$ 2 & 0 \\
2 $\times$ 100 & 0.0004 & 19 $\pm$ 0.5 & 10.5 $\pm$ 0.3 & 53 $\pm$ 3 & 56 $\pm$ 3 & 0 \\
\midrule
\midrule
2 $\times$ 200 & 0 & 28.2 $\pm$ 0.3 & 25 $\pm$ 3 & 0 $\pm$ 0 & 0 $\pm$ 0 & 0 \\
2 $\times$ 200 & 0.000025 & 28.5 & 29.4 & 0 & 0 & 4 \\
2 $\times$ 200 & 0.00005 & 28.1 $\pm$ 0.4 & 27 $\pm$ 7 & 31 $\pm$ 2 & 42 $\pm$ 3 & 0 \\
2 $\times$ 200 & 0.000075 & 27.6 $\pm$ 0.3 & 40 $\pm$ 10 & 36 $\pm$ 1 & 48 $\pm$ 1 & 0 \\
2 $\times$ 200 & 0.0001 & 26.9 $\pm$ 0.3 & 27 $\pm$ 9 & 40 $\pm$ 1 & 52 $\pm$ 1 & 0 \\
2 $\times$ 200 & 0.000125 & 26.1 $\pm$ 0.3 & 20.8 $\pm$ 0.5 & 44 $\pm$ 2 & 57 $\pm$ 2 & 0 \\
2 $\times$ 200 & 0.00015 & 25.7 $\pm$ 0.2 & 21 $\pm$ 1 & 46 $\pm$ 2 & 58 $\pm$ 2 & 0 \\
2 $\times$ 200 & 0.000175 & 25 $\pm$ 0.3 & 21.1 $\pm$ 0.8 & 48 $\pm$ 2 & 60 $\pm$ 2 & 0 \\
2 $\times$ 200 & 0.0002 & 24.2 $\pm$ 0.4 & 21.2 $\pm$ 0.6 & 49.1 $\pm$ 0.9 & 61.6 $\pm$ 0.9 & 1 \\
\midrule
2 $\times$ 400 & 0 & 30.2 $\pm$ 0.2 & 46.2 $\pm$ 0.8 & 0 $\pm$ 0 & 0 $\pm$ 0 & 1 \\
2 $\times$ 400 & 0.000025 & 30.71 $\pm$ 0.04 & 51 $\pm$ 7 & 0 $\pm$ 0 & 0 $\pm$ 0 & 2 \\
2 $\times$ 400 & 0.00005 & 30.2 $\pm$ 0.3 & 60 $\pm$ 10 & 26.5 $\pm$ 0.8 & 42 $\pm$ 1 & 1 \\
2 $\times$ 400 & 0.000075 & 29.13 $\pm$ 0.09 & 49 $\pm$ 5 & 33 $\pm$ 2 & 51 $\pm$ 3 & 2 \\
2 $\times$ 400 & 0.0001 & 28 $\pm$ 0.4 & 51 $\pm$ 7 & 38.3 $\pm$ 0.7 & 56.8 $\pm$ 0.9 & 1 \\
2 $\times$ 400 & 0.000125 & 26.8 $\pm$ 0.4 & 44 $\pm$ 1 & 43 $\pm$ 1 & 62 $\pm$ 2 & 1 \\
2 $\times$ 400 & 0.00015 & 25.9 $\pm$ 0.3 & 47 $\pm$ 3 & 45 $\pm$ 2 & 64 $\pm$ 2 & 1 \\
2 $\times$ 400 & 0.000175 & 25 $\pm$ 0.2 & 44 $\pm$ 3 & 47 $\pm$ 1 & 66 $\pm$ 2 & 0 \\
2 $\times$ 400 & 0.0002 & 24.2 $\pm$ 0.1 & 44 $\pm$ 2 & 48 $\pm$ 2 & 66 $\pm$ 2 & 0 \\
\midrule
2 $\times$ 800 & 0 & 31.32 $\pm$ 0.09 & 100 $\pm$ 20 & 0 $\pm$ 0 & 0 $\pm$ 0 & 2 \\
2 $\times$ 800 & 0.00005 & 30.9 $\pm$ 0.3 & 300 $\pm$ 100 & 21.4 $\pm$ 0.8 & 38 $\pm$ 1 & 0 \\
2 $\times$ 800 & 0.000075 & 29.4 $\pm$ 0.2 & 200 $\pm$ 100 & 32.5 $\pm$ 0.5 & 52.2 $\pm$ 0.8 & 0 \\
2 $\times$ 800 & 0.0001 & 27.8 $\pm$ 0.3 & 97 $\pm$ 5 & 39.1 $\pm$ 0.7 & 60 $\pm$ 0.8 & 0 \\
2 $\times$ 800 & 0.000125 & 26.7 $\pm$ 0.2 & 2000 $\pm$ 4000 & 41 $\pm$ 1 & 62 $\pm$ 1 & 0 \\
2 $\times$ 800 & 0.00015 & 25.8 $\pm$ 0.2 & 98 $\pm$ 5 & 42 $\pm$ 1 & 64 $\pm$ 2 & 0 \\
2 $\times$ 800 & 0.000175 & 24.6 $\pm$ 0.2 & 200 $\pm$ 100 & 44 $\pm$ 2 & 65 $\pm$ 2 & 0 \\
2 $\times$ 800 & 0.0002 & 23.6 $\pm$ 0.5 & 110 $\pm$ 10 & 44.4 $\pm$ 1 & 66 $\pm$ 1 & 0 \\
\bottomrule
\end{tabular}
}
\end{sc}
\end{small}
\end{center}
\vskip -0.1in
\end{table}

\textbf{Runtime Comparison with SoTA~~}
\myReferFigure{fig:cifar_hundred_runtime_with_reg} shows the comparison of runtime of the proposed method and the baseline with the strength of $\lOne$ regularization on the CIFAR-100 classifiers. We observe that the new method presents a median gain of $\mathbf{137}$ times in performance.
\begin{figure}[!t]
    \centering
    \begin{subfigure}[t]{0.45\linewidth}
        \runtimeRegularizationPlot{0.7}{2}{300000}{0, 0.00005, 0.0001, 0.00015, 0.0002}{
            \addLinePlotWithErrorBar {mpl_red }{2pt}{data/cifar_hundred/cifar_hundred_figure1_sheet.txt}{100-100}{smooth}{runtime}{runtime-std}
            \addlegendentry{$2 \times 100$}
            
            \addLinePlotWithErrorBar {mpl_orange }{2pt}{data/cifar_hundred/cifar_hundred_figure1_sheet.txt}{200-200}{smooth}{runtime}{runtime-std}
            \addlegendentry{$2 \times 200$}
            
            \addLinePlotWithErrorBar {mpl_green  }{2pt}{data/cifar_hundred/cifar_hundred_figure1_sheet.txt}{400-400}{smooth}{runtime}{runtime-std}
            \addlegendentry{$2 \times 400$}
            
            \addLinePlotWithErrorBar {mpl_blue}{2pt}{data/cifar_hundred/cifar_hundred_figure1_sheet.txt}{800-800}{smooth}{runtime}{runtime-std}
            \addlegendentry{$2 \times 800$}

            \addLinePlotWithErrorBar {mpl_red }{2pt}{data/cifar_hundred/cifar_hundred_figure1_sheet.txt}{100-100}{dotted}{runtime-old}{runtime-old-std}
            \addLinePlotWithErrorBar {mpl_orange }{2pt}{data/cifar_hundred/cifar_hundred_figure1_sheet.txt}{200-200}{dotted}{runtime-old}{runtime-old-std}
            \addLinePlotWithErrorBar {mpl_green  }{2pt}{data/cifar_hundred/cifar_hundred_figure1_sheet.txt}{400-400}{dotted}{runtime-old}{runtime-old-std}
            \addLinePlotWithErrorBar {mpl_blue}{1pt}{data/cifar_hundred/cifar_hundred_figure1_sheet.txt}{800-800}{dotted}{runtime-old}{runtime-old-std}

        }
        \caption{With width}
    \end{subfigure}
    \hfill
    \begin{subfigure}[t]{0.45\linewidth}
        \runtimeRegularizationPlot{0.7}{2}{300000}{0, 0.00005, 0.0001, 0.00015, 0.0002}{
            \addLinePlotWithErrorBar {mpl_red }{2pt}{data/cifar_hundred/cifar_hundred_figure1_sheet.txt}{100-100}{smooth}{runtime}{runtime-std}
            \addlegendentry{$2 \times 100$}
            \addLinePlotWithErrorBar {mpl_orange}{2pt}{data/cifar_hundred/cifar_hundred_figure1_sheet.txt}{100-100-100}{smooth}{runtime}{runtime-std}
            \addlegendentry{$3 \times 100$}
    
            \addLinePlotWithErrorBar {mpl_red }{2pt}{data/cifar_hundred/cifar_hundred_figure1_sheet.txt}{100-100}{dotted}{runtime-old}{runtime-old-std}
            \addLinePlotWithErrorBar {mpl_orange}{2pt}{data/cifar_hundred/cifar_hundred_figure1_sheet.txt}{100-100-100}{dotted}{runtime-old}{runtime-old-std}
            
        }
        \caption{With depth}
    \end{subfigure}
    \caption{
    \textbf{CIFAR-100 Classifiers: Comparison of runtimes} for proposed method (solid) and baseline (dashed) with the strength of regularization to identify stable neurons: (a) with increasing width (b) with increasing depth. We report the average and the standard deviation of the runtime of models with five different initialization for each regularization. Note that the y-axis is in the log scale.
    The median speedup is $\mathbf{137}$ times. 
    }
    \label{fig:cifar_hundred_runtime_with_reg}
\end{figure}

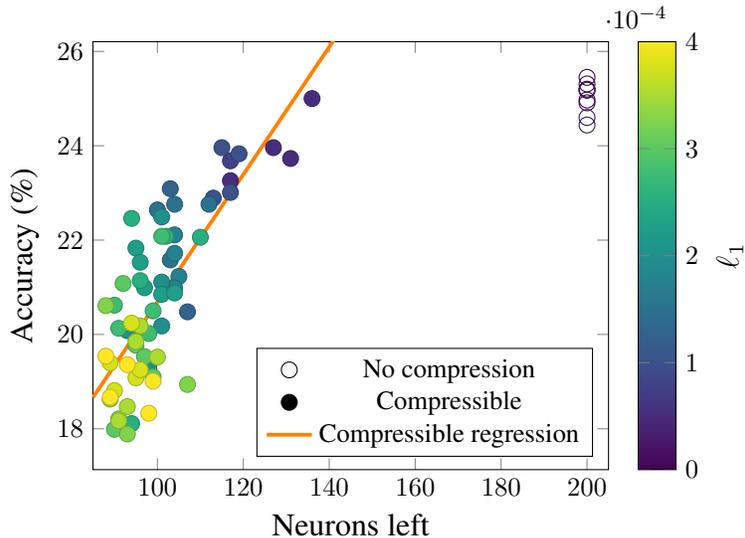
\begin{figure}[!htb]
    \centering
        \begin{tikzpicture}
        \begin{axis}[
            legend pos=south east,
            colorbar,
            point meta=explicit,
            point meta min=0,
            point meta max=0.0004,
            cycle list name=mark list,
            colormap name=viridis,
            xlabel={\large{Neurons left}},
            ylabel={\large{Accuracy (\%)}},
            colorbar style={ylabel={\large$\lOne$}},
            xmin=85,
            xmax=205,
        ]
        \addlegendentry{No compression}
        \addplot+[
            only marks,
            scatter,
            mark=o,
            mark size=3pt]
        table[meta=reg,x=nodes,y=accuracy]
        {data/cifar100_uncompressed_regression.txt};
        \addplot+[
            only marks,
            scatter,
            mark=*,
            mark size=3pt]
        table[meta=reg,x=nodes,y=accuracy]
        {data/cifar100_compressed_regression.txt};
        \addlegendentry{Compressible}
        \draw[color=orange,line width=0.5mm]
            (axis cs:85,18.6638) -- 
            (axis cs:205,34.8758);
        \addlegendimage{line width=0.5mm,color=orange}
        \addlegendentry{Compressible regression}
        \end{axis}
        \end{tikzpicture}
    \caption{\textbf{Relationship between size of compressed neural network and accuracy on $2 \times 100$ CIFAR-100 classifiers.} The coefficient of determination (R$^2$) for the linear regression obtained for accuracy based on neurons left for compressible networks is $61$\%.}
    \label{fig:mnist_accuracy}
\end{figure}

\clearpage
\subsection{Extensions to CNNs: CIFAR-10 LeNet Classifiers}\label{ap:cnn}

We also test our approach with the LeNet~\cite{lecun1998mnist} architecture on CIFAR-10 using the preprocessing step as a predictor of neuron stability to make it more scalable. 
We note that in this case we would only use our method as a sparsification technique to mask stably inactive zeros due to parameter sharing.

When no regularization is used and the test accuracy on CIFAR-10 is around $68.7$\% before pruning, we find that an average of $10.98$\% of the stably inactive neurons can be masked as $0$. 
With an $\lOne$ regularization of $0.000175$, test accuracy on CIFAR-10 is around $70.02$\% before pruning while an average of $11.86$\% of the stably inactive neurons can be masked as $0$. 
In comparison to the case of MLPs, we observe more variability on the number of stable neurons across networks trained with the same amount of regularization, which we believe is due to weight sharing. 
Similar to the case of MLPs, the proportion of neurons that are stable in the training set but not stable in the test set is relatively small: $1.06$\% on average. 
Moreover, we observe that pruning those extra neurons has a zero net effect on accuracy for regularization values in the interval $[0, 0.0003]$.

On a final note, we emphasize that masking $10$\% of the neurons is more strict than masking $10$\% of the parameters as done for lossy compression. 
Furthermore, masking $10$\% of the neurons does not prevent someone from sparsifying the CNN even further: our method merely identifies a set of neurons---and corresponding parameters---that can be ignored for not being relevant. 
We believe that our method could be used in conjunction with conventional sparsification techniques in order to decompose the pruning operations of those into a lossless and a lossy component.

\section{Extensions to Data and Batch Normalization}\label{ap:normalization}

Normalization layers, specially Batch Normalization~\cite{ioffe2015batch}, are present in almost every modern neural network~\cite{he2016deep}.
We now show how to extend our approach to these layers.

\paragraph{Data Normalization}
Data normalization transforms the input $x$ as
\begin{align}
    \text{Norm}(x) &= \frac{x-\mu}{\sigma},
\end{align}
where $(\mu, \sigma)$ correspond to the mean and standard deviation of the data, respectively.

Since, we assume the image pixels to lie in the range $[0, 1]$, the data normalization layer brings the image pixels in the range $\left[-\frac{\mu}{\sigma}, \frac{1-\mu}{\sigma} \right]$. 
Thus, we incorporate data normalization in our approach by adjusting the input bounds using the mean and standard deviation parameters.
Hence, we replace the constraint $x \in [0, 1]$ with the new constraint $x \in \left[-\frac{\mu}{\sigma}, \frac{1-\mu}{\sigma} \right]$.

\paragraph{Batch Normalization}
Batch Normalization~(BN)~\cite{ioffe2015batch} corresponds to applying the affine transformation to the input $x$ as 
\begin{align}
    \text{BN}(x) &= \gamma \left( \frac{x-\meanbatch}{\sqrt{\varbatch + \epsilon}} \right) + \beta,
\end{align}
where $(\meanbatch, \varbatch)$ are the mean and variance (the mini-batch statistics) of the data, $(\gamma, \beta)$ are the trainable parameters, and $\epsilon$ is a small constant to avoid division by zero.  

For lossless compression, we run the MILP solver after the training of the neural network completes. 
Thus, BN mini-batch statistics are frozen (do not update) while running MILP, and BN only serves to scale the layer input. 
If the layer input before the BN layer is in the range $[h_{min}, h_{max}]$, the BN layer brings these input in the range
$\left[\gamma \left( \dfrac{h_{min}-\meanbatch}{\sqrt{\varbatch + \epsilon}} \right) + \beta, \gamma \left( \dfrac{h_{max}-\meanbatch}{\sqrt{\varbatch + \epsilon}} \right) + \beta \right]$.
Thus, BN does not introduce any extra constraint for the MILP formulation. 

We end this discussion on a final note. Although BN in inference does an affine transform of the input, BN in inference is different from the fully connected layer. 
BN in inference transforms the inputs individually without taking contributions from other inputs into account. 
On the other hand, a fully connected layer does an affine transform while taking the contributions of all inputs into account. 

\end{document}